\newcommand{\SHOR}{Harmonica}
\newcommand{\Var}{\mathbf{Var}}
\newcommand{\E}{\mathbb{E}}
\newcommand{\R}{\mathbb{R}}
\newcommand{\N}{\mathbb{N}}
\newcommand{\X}{\mathcal{X}}
\newcommand{\D}{\mathcal{D}}
\def\fhat{\hat{f}}
\newcommand{\ignore}[1]{}
\newcommand{\eh}[1]{\noindent{\textcolor{magenta}{\{{\bf EH:} \em #1\}}}}
\theoremstyle{plain}
\newtheorem{theorem}{Theorem}
\newtheorem{lemma}[theorem]{Lemma}
\newtheorem{corollary}[theorem]{Corollary}
\newtheorem{fact}[theorem]{Fact}
\newtheorem*{theorem*}{Theorem}
\newtheorem*{lemma*}{Lemma}
\newtheorem*{corollary*}{Corollary}
\newtheorem*{proposition*}{Proposition}
\newtheorem*{claim*}{Claim}
\newtheorem*{fact*}{Fact}
\newtheorem*{observation*}{Observation}
\theoremstyle{definition}
\newtheorem{definition}[theorem]{Definition}
\newtheorem*{definition*}{Definition}
\newtheorem*{remark*}{Remark}
\newtheorem*{example*}{Example}
 \theoremstyle{plain}
\newtheorem*{theoremaux}{\theoremauxref}
\gdef\theoremauxref{1}
\DeclareMathAlphabet{\mathbfsf}{\encodingdefault}{\sfdefault}{bx}{n}
\DeclareMathOperator*{\argmin}{arg\,min}
\let\Pr\relax
\DeclareMathOperator{\Pr}{\mathbb{P}}
\newcommand{\mycases}[4]{{
\left\{
\begin{array}{ll}
    {#1} & {\;\text{#2}} \\[1ex]
    {#3} & {\;\text{#4}}
\end{array}
\right. }}
\newcommand{\reals}{\mathbb{R}}
\newcommand{\eps}{\varepsilon}
\renewcommand{\leq}{~\le~}
\renewcommand{\geq}{~\ge~}
\let\oldtfrac\tfrac
\renewcommand{\tfrac}[2]{\smash{\oldtfrac{#1}{#2}}}
\let\nablaold\nabla
\renewcommand{\nabla}{\nablaold\mkern-2.5mu}
\renewcommand{\epsilon}{\varepsilon}
\title{Hyperparameter Optimization: A  Spectral Approach}
\author{
	Elad Hazan \\
	Department of Computer Science\\
	Princeton University \\
	\texttt{ehazan@cs.princeton.edu}\\
	\and
	Adam Klivans \\
	Department of Computer Science\\
	University of Texas at Austin \\
	\texttt{klivans@cs.utexas.edu}
	\and
	Yang Yuan \\
	Department of Computer Science\\
	Cornell University \\
	\texttt{yangyuan@cs.cornell.edu}
}
\date{}
\begin{document}
\maketitle
\thispagestyle{empty}

\begin{abstract}
 We give a simple, fast algorithm for hyperparameter optimization
 inspired by techniques from the analysis of Boolean functions.  We
 focus on the high-dimensional regime where the canonical example is
 training a neural network with a large number of hyperparameters.
 The algorithm --- an iterative application of compressed sensing
 techniques for orthogonal polynomials --- requires only uniform
 sampling of the hyperparameters and is thus easily parallelizable.
 
 Experiments for training deep neural networks on Cifar-10 show that compared to
 state-of-the-art tools (e.g., Hyperband and Spearmint), our algorithm
 finds significantly improved solutions, in some cases better than what
 is attainable by hand-tuning.  In terms of overall running time
 (i.e., time required to sample various settings of hyperparameters
 plus additional computation time), we are at least an order of
 magnitude faster than Hyperband and 
 Bayesian
 Optimization.  We also outperform Random Search $8\times$.
  
 Our method is inspired by provably-efficient algorithms for learning
 decision trees using the discrete Fourier transform.  We obtain
 improved sample-complexty bounds for learning decision trees while
 matching state-of-the-art bounds on running time (polynomial and
 quasipolynomial, respectively). 

\end{abstract}

\setcounter{page}{0}
\newpage

\section{Introduction}

Large scale machine learning and optimization systems usually involve a large number of free parameters for the user to fix according to their application. A timely example is the training of deep neural networks for a signal processing application: the ML specialist needs to decide on an architecture, depth of the network, choice of connectivity per layer (convolutional, fully-connected, etc.), choice of optimization algorithm and recursively choice of parameters inside the optimization library itself (learning rate, momentum, etc.).  

Given a set of hyperparameters and their potential assignments, the naive practice is to search through the entire grid of parameter assignments and pick the one that performed the best, a.k.a. ``grid search". As the number of hyperparameters increases, the number of possible assignments increases exponentially and a grid search becomes quickly infeasible. It is thus crucial to find a method for automatic tuning of these parameters. 

This auto-tuning, or finding a good setting of these parameters, is now referred to as
hyperparameter optimization (HPO), or simply automatic machine learning (auto-ML).
For continuous hyperparameters, gradient descent is usually the method of choice
\cite{Maclaurin15,scalablegradientbasedtuning,drmad}.  Discrete parameters, however, such as choice of architecture, number of layers, connectivity and so forth are significantly more challenging. More formally, let 
$$ f:  \{-1,1\}^n \mapsto [0,1] $$ 
be a function mapping hyperparameter choices to test error of our model. That is, each dimension corresponds to a certain hyperparameter (number of layers, connectivity, etc.), and for simplicity of illustration we encode the choices for each parameter as binary numbers $\{-1,1\}$. The goal of HPO is to approximate the  minimizer $x^* = \argmin_{x \in \{0,1\}^n} f(x)$ in the following setting: 
\begin{enumerate}
\item
Oracle model: evaluation of $f$ for a given choice of hyperparameters is assumed to be very expensive. Such is the case of training a given architecture of a huge dataset.  

\item
Parallelism is crucial: testing several model hyperparameters in parallel is entirely possible in cloud architecture, and dramatically reduces overall optimization time. 

\item
$f$ is structured. 
\end{enumerate} 

The third point is very important since clearly HPO is  information-theoretically hard and $2^n$ evaluations of the function are necessary in the worst case. 
Different works have considered exploiting one or more of the properties above.  The approach of Bayesian optimization \cite{bayesianOPT} addresses the structure of $f$, and assumes that a useful prior distribution over the structure of $f$ is known in advance.   Multi-armed bandit algorithms \cite{hyperband}, and 
Random Search  \cite{Bergstra12}, exploit computational parallelism very well, but do not exploit any particular structure of $f$. These approaches are surveyed in more detail later.

\subsection{Our contribution}

In this paper we introduce a new {\em spectral} approach to hyperparameter
optimization. Our main idea is to make assumptions on the structure of $f$ in the Fourier domain. Specifically we assume that $f$ can be approximated by a sparse and low degree polynomial in the Fourier basis. This means intuitively that it can be approximated well by a decision tree. 

The implication of this assumption is that we can obtain a rigorous theoretical guarantee: approximate minimization of $f$ over the boolean hypercube with 
{\bf 
function evaluations only 
linear in sparsity that can be carried out in parallel}.  
We further give improved heuristics on this basic construction and show experiments showing our assumptions are validated in practice for HPO as applied to deep learning over image datasets.

%

Thus our contributions can be listed as:
\begin{itemize}
\item
A new spectral method called {\em Harmonica} that has provable
guarantees: sample-efficient recovery if the underlying hyperparameter
objective is a sparse (noisy) polynomial and easy to implement on parallel architectures.
\item
 Improved bounds on the sample complexity of  learning noisy, size $s$ decision
 trees over $n$ variables under the uniform distribution. We observe
 that the classical sample complexity bound of $n^{O({\log (
     s/\epsilon)})}$ due to Linial et al. \cite{LMN} to quadratic in the size of the tree
 $\tilde{O}(s^{2}/\epsilon \cdot \log n)$ while matching the best known quasipolynomial bound in
        running time. 

 \item
 We demonstrate significant improvements in accuracy, sample
 complexity, and running time for deep neural net training
 experiments.  We compare ourselves to state-of-the-art solvers from
 Bayesian optimization, Multi-armed bandit techniques, and Random Search.  Projecting to even higher numbers of hyperparameters, we perform simulations that show several orders-of-magnitude of speedup versus Bayesian optimization techniques.
\end{itemize}

\subsection{Previous work}

The literature on discrete-domain HPO can be roughly divided into two: probabilistic approaches and decision-theoretic methods. In critical applications, researchers usually use a grid search over all parameter space, but that becomes quickly prohibitive as the number of hyperparameter grows.  Gradient-based methods such as \cite{Maclaurin15,scalablegradientbasedtuning,drmad,gradientbasedoptimizationofhyperparamters} are applicable only to continuous hyperparameters which we do not consider.

\paragraph{Probabilistic methods and Bayesian optimization.} Bayesian optimization (BO) algorithms  \cite{tpe,bayesianOPT,multitaskBO,inputBO,inequBO,highDim,rbfbayesian} 
tune hyperparameters by  assuming a prior distribution of the loss function, and then keep updating this prior distribution based on the new observations. Each new observation is selected according to an acquisition function, which balances exploration and exploitation such that the new observation gives us a better result, or helps gain more information about the loss function. 
The BO approach is inherently serial and difficult to parallelize, and its theoretical guarantees have thus far been limited to statistical consistency (convergence in the limit).  

\paragraph{Decision-theoretic methods.} Perhaps the simplest approach to HPO is random sampling of different choices of parameters and picking the best amongst the chosen evaluations \cite{Bergstra12}. It is naturally very easy to implement and parallelize. Upon this simple technique, researchers have tried to allocate different budgets to the different evaluations, depending on their early performance. Using adaptive resource allocation techniques found in the multi-armed bandit literature, 
Successive Halving (SH) algorithm was introduced \cite{successive}. 
Hyperband further improves SH by automatically tuning the hyperparameters in SH \cite{hyperband}.

\paragraph{Learning decision trees.}

Prior work for learning decision trees (more generally Boolean functions
that are approximated by low-degree polynomials) used the celebrated
``low-degree'' algorithm of Linial, Mansour, and Nisan \cite{LMN}.  Their
algorithm uses random sampling to estimate each low-degree Fourier
coefficient to high accuracy.  


We make use of the approach of Stobbe and Krause \cite{StobbeKrause},
who showed how to learn low-degree, sparse Boolean functions using
tools from compressed sensing (similar approaches were taken by
Kocaoglu et al. \cite{KSDK14} and Negahban and Shah \cite{NS12}).  We observe that their approach can be
extended to learn functions that are both ``approximately sparse'' (in the sense
that the $L_1$ norm of the coefficients is bounded) and ``approximately low-degree''
(in the sense that most of the $L_2$ mass of the Fourier spectrum
resides on monomials of low-degree).  This implies the first decision
tree learning algorithm with polynomial sample complexity that handles
adversarial noise.  In addition, we obtain the optimal dependence on
the error parameter $\epsilon$.

For the problem of learning {\em exactly} $k$-sparse Boolean functions
over $n$ variables, Haviv and Regev \cite{HavivR2} have recently shown
that $O(n k \log n)$ uniformly random samples suffice.  Their result
is not algorithmic but does provide an upper bound on the
information-theoretic problem of how many samples are required to
learn.  The best algorithm in terms of running time for learning
$k$-sparse Boolean functions is due to \cite{FGKP}, and requires time
$2^{\Omega(n/\log n)}$. It is based on the Blum, Kalai, and Wasserman
algorithm for learning parities with noise \cite{BKW}.

\paragraph{Techniques.} Our methods are heavily based on known results from the analysis of boolean functions as well as compressed sensing. The relevant material and literature are given in the next section.

\section{Setup and definitions}


The problem of hyperparameter optimization is that of minimizing a discrete, real-valued function, which we denote by  $f: \{-1,1\}^n \mapsto [-1,1]$  (we can handle arbitrary inputs, binary is chosen for simplicity of presentation).   

In the context of hyperparameter optimization, function
evaluation is very expensive, although parallelizable, as it
corresponds to training a deep neural net.   In contrast,
any computation that does not involve function evaluation is
considered less expensive, such as computations that require
 time $\Omega(n^d)$ for ``somewhat large'' $d$ or are subexponential (we still consider
 runtimes that are exponential in $n$ to be costly).

\subsection{Basics of Fourier analysis} \label{sec:fourier}

The reader is referred to \cite{booleananalysis} for an in depth treatment of Fourier analysis of Boolean functions.
Let $f: \X \mapsto [-1,1]$ be a function over domain $\X \subseteq \reals^n$.  Let ${\cal D}$ a probability 
distribution on ${\cal X}$.   We write  $g \equiv_\epsilon f$ and say that $f,g$ are {\bf $\eps$-close} if 
$$ \E_{x \sim \D} [(f(x) - g(x))^2] \leq \eps .$$
\begin{definition} \cite{rauhut2010compressive} 
We say a family of functions
$\psi_{1},\ldots, \psi_{N}$ ($\psi_i$ maps ${\cal X}$ to $\R$) is a {\em Random Orthonormal Family} with
respect to ${\cal D}$ if 
$$\E_\D[\psi_{i}(X) \cdot \psi_{j}(X)] = \delta_{ij}  = \mycases  {1}{ if $i = j$}{ 0}{ otherwise} .$$
\end{definition}
The expectation is taken with respect to probability distribution
${\cal D}$.  We say that the family is $K$-bounded if $\sup_{x \in {\cal X}} |\psi_{i}(x)| \leq K$ for every $i$.  Henceforth we assume $K=1$.

An important example of a random orthonormal family is the class of
parity functions with respect to the uniform distribution on
$\{-1,1\}^n$:

\begin{definition}
  A parity function on some subset of variables $S \subseteq [n]$ is
  the function $\chi_S: \{-1,1\}^n \mapsto \{-1,1\}$ where
  $\chi_{S}(x) = \prod_{i \in S} x_i $.  
\end{definition}

It is easy to see that the set of all $2^{n}$ parity functions
$\{\chi_{S}\}$, one for each $S \subseteq [n]$, form a random
orthonormal family with respect to the uniform distribution on
$\{-1,1\}^n$.  

This random orthonormal family is 
  often referred to as the Fourier basis, as it is a complete
  orthonormal basis for the class of Boolean functions with respect to
  the uniform distribution on $\{-1,1\}^n$.  More generally,
  for any $f: \{-1,1\}^n \mapsto \R$, $f$ can be uniquely represented
  in this basis as
$$ f(x) = \sum_{S \subseteq [n]} \fhat_S \chi_S (x) $$
where 
$$\hat{f}_S = \langle f , \chi_S \rangle = \E_{x \in \{-1,1\}^n } [ f(x) \chi_S(x) ] $$
is the Fourier coefficient corresponding to $S$ where $x$ is drawn
uniformly from $\{-1,1\}^n$.  We also have Parseval's identity: $\E[f^2] = \sum_{S} \fhat_S^2$. 

In this paper, we will work exclusively with the above parity basis.  Our results
apply more generally, however, to any orthogonal family of polynomials
(and corresponding product measure on $\R^n$).  For example, if we
wished to work with continuous hyperparameters, we could work with
families of Hermite orthogonal polynomials with respect to
multivariate spherical {\em Gaussian} distributions.

\ignore{
More generally, we have the following definitions for classes of
orthogonal polynomials with respect to product spaces:


\begin{definition}[Orthonormal Bases of Polynomials] \eh{this definition needs to be simplified, maybe consider only the special case of degree d polynomials over the boolean field}
	Fix a product distribution ${\cal D}$ on
	$\R^n = \mu_{1} \times \ldots \times \mu_{n}$.
	Let $\{p_{ij}(x)\}_{j=1}^{\infty}$ be a complete family of orthonormal polynomials
	with respect to $\mu_{i}$ (depending on $\mu_{i}$ the family
        may be finite).   That is, $\E_{x \sim \mu_i}[p_{ij}(x) p_{ik}(x)] = \delta_{jk}$ and for every function
	$g: \R \mapsto \R$ with $\E[g^2] \leq \infty$ we have that $g$ can be
	written uniquely as $\sum_{j=1}^{\infty} \hat{g}(j)p_{ij}(x)$ and
	$\lim_{d \rightarrow \infty} \E[(g(x) - \sum_{j=1}^{d}
	\hat{g}(j)p_{ij}(x))^2] = 0$.  Now we can define $n$-variate families of
	orthonormal polynomials
	as follows: for $S \in \N^{n}$ let $\psi_{S} = \Pi_{i=1}^{n}
	p_{iS_{i}}(x)$. 
        Then $\{\psi_{S}\}_{S \in \N^{n}}$ (note that $\psi_{S}$ has total degree $|S|
	= \sum_{i} S_{i}$) is a random orthonormal family and 
	a complete orthonormal basis for all $f: \R^n \mapsto \R$ with
	$E_{{\cal D}}[f^{2}] \leq \infty$.  That is, $\lim_{d \rightarrow
		\infty} \E_{{\cal D}}[(f
	- \sum_{S, |S| \leq d} \hat{f}(S)\psi_{S})^2] = 0$.  We call $\{\psi_{S}\}$ an {\em orthonormal
		polynomial basis} for ${\cal D}$.  
\end{definition}
}


We conclude with a definition of low-degree, approximately sparse
(bounded $L_1$ norm) functions:

\begin{definition}[Approximately sparse function] \label{def:sparse}
	Let $\{\chi_S\}$ be the parity basis, and let ${\cal C}$ be a
        class of functions mapping $\{-1,1\}$ to $\R$.  Thus for $f
        \in {\cal C}$,  $f = \sum_{S}
	\hat{f}(S) \chi_{S}$.  	We say that:
	\begin{itemize}
	\item
	 A function $f \in C$ is {\bf $s$-sparse} if
        $L_0(f) \leq s$, ie., f has at most $s$ nonzero entries in its
        polynomial expansion.  
        \item
        $f$ is {\bf $(\epsilon,d)$-concentrated} if $\E[(f - \sum_{S, |S| \leq d} \hat{f}(S)
	\chi_{S})^2]\geq 1-\eps$.   
	\item
	 ${\cal C}$ is {\bf $(\eps, d, s)$-bounded} if for
	every $f \in {\cal C}$, $f$ is $(\epsilon, d)$-concentrated and in addition ${\cal C}$ has $L_1$ norm bounded by
	$s$, that is, for every $f
	\in {\cal C}$ we have $\sum_{S} |\hat{f}(S)| \leq s$.
\end{itemize}

\end{definition}

It is easy to see that the class of functions with bounded $L_1$ norm
is more general than sparse functions.  For example, the Boolean
$\mathsf{AND}$ function has $L_1$ norm bounded by $1$ but is not
sparse. 

We also have the following simple fact: 
\begin{fact} \label{sparsefact} \cite{MansourSurvey}
	Let $f$ be such that $L_1(f) \leq s$.  Then there exists $g$ such that
	$g$ is $s^2/\eps$ sparse and $E[(f-g)^2] \leq \epsilon$. The function
	$g$ is constructed by taking all coefficients of magnitude $\eps/s$ or
	larger in $f$'s expansion as a
	polynomial.
\end{fact}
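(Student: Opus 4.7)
The plan is to prove \cref{sparsefact} by explicitly constructing $g$ via hard-thresholding of the Fourier expansion of $f$ at level $\eps/s$, and then verifying the two claimed properties (sparsity and $L_2$ approximation) independently. Concretely, I would define
\[
T \eqdef \{ S \subseteq [n] : |\hat f(S)| \ge \eps/s \}, \qquad g \eqdef \sum_{S \in T} \hat f(S)\, \chi_S,
\]
so that $g$ is obtained from $f$ by discarding the ``small'' Fourier coefficients. The two properties to verify are then $|T| \le s^2/\eps$ and $\E[(f-g)^2] \le \eps$.

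For the sparsity bound, I would use a simple counting (Markov-style) argument against the $L_1$ hypothesis: since every coefficient indexed by $S \in T$ has magnitude at least $\eps/s$, we get
\[
|T| \cdot \frac{\eps}{s} \leq \sum_{S \in T} |\hat f(S)| \leq \sum_S |\hat f(S)| \leq s,
\]
which rearranges to $|T| \le s^2/\eps$, so $g$ is $s^2/\eps$-sparse as claimed.

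For the approximation bound, the key step is to combine Parseval's identity with the defining inequality of the threshold. By Parseval applied to $f-g$, which has Fourier support on $\overline T$,
\[
\E[(f-g)^2] \eq \sum_{S \notin T} \hat f(S)^2 \eq \sum_{S \notin T} |\hat f(S)| \cdot |\hat f(S)|.
\]
For each $S \notin T$ we have $|\hat f(S)| < \eps/s$, so bounding one of the two factors by this threshold and using the $L_1$ hypothesis on the other factor gives
\[
\E[(f-g)^2] \leq \frac{\eps}{s} \sum_{S \notin T} |\hat f(S)| \leq \frac{\eps}{s} \cdot s \eq \eps,
\]
completing the proof. There is no real obstacle here: the only ``trick'' is the standard maneuver of writing $\hat f(S)^2 = |\hat f(S)| \cdot |\hat f(S)|$ and bounding one factor by the threshold to trade an $L_2$ error for an $L_1$ quantity; everything else is bookkeeping.
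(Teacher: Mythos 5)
Your proof is correct and is exactly the standard thresholding argument behind this fact; the paper itself states it without proof, citing Mansour's survey, and your argument (Markov-style counting for the $s^2/\eps$ sparsity bound, then Parseval plus the $\hat f(S)^2 \le (\eps/s)\,|\hat f(S)|$ trade to convert the tail $L_2$ mass into an $L_1$ bound) is precisely the cited proof. Nothing is missing.
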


\subsection{Compressed sensing and sparse recovery}

In the problem of {\it sparse recovery}, a learner attempts to recover a sparse vector $x \in \reals^n$ which is $s$ sparse, i.e. $\|x\|_0 \leq s$,  from an observation vector $y \in R^m$ that is assumed to equal 
$$ y = A x + e ,$$
where $e$ is assumed to be zero-mean, usually Gaussian, noise. The
seminal work of \cite{Candes,Donoho} shows how $x$ can be recovered
exactly under various conditions on the observation matrix $A \in \reals^{m \times n}$ and the noise.  The usual method for recovering the signal proceeds by solving a convex optimization problem consisting of $\ell_1$ minimization as follows (for some parameter $\lambda > 0$):
\begin{align} \label{alg:BPsimple}
& \min_{x \in \reals^n} \left\{   \| x \|_1 + \lambda \| A x - y \|_2^2 \right\} .
\end{align}
The above formulation comes in many equivalent forms (e.g., Lasso), where one of the objective parts may appear as a hard constraint.

For our work, the most relevant extension of traditional sparse
recovery is due to  Rauhut \cite{rauhut2010compressive}, who considers the problem of sparse recovery when
the measurements are evaluated according to a {\em random orthonormal
family}.  More concretely, fix $x \in \R^{n}$ with $s$ non-zero entries.
For $K$-bounded random orthonormal family ${\cal F} =
\{ \psi_{1},\ldots,\psi_{N}$\}, and $m$ independent draws
$z^{1},\ldots,z^{m}$ from corresponding distribution ${\cal D}$ define
the $m \times N$ matrix $A$ such that $A_{ij} = \psi_{j}(z^i)$.
Rauhut gives the following result for recovering sparse vectors $x$:

\begin{theorem}[Sparse Recovery for Random Orthonormal Families, 
	\cite{rauhut2010compressive}  Theorem 4.4] \label{GG}
	Given as input matrix $A\in \mathbb{R}^{m\times N}$ and vector $y$ with
	$y_{i} = Ax + e_i$ for some vector $e$ with $\|e\|_2 \leq \eta
	\sqrt{m}$, mathematical program \eqref{alg:BPsimple} finds a vector $x^{*}$ such that  (for constants $c_1$ and $c_2$)
	
	\vspace{-0.8em}
	\[ \|x - x^{*}\|_2 \leq c_1\frac{\sigma_{s}(x)_1}{\sqrt{s}} +
        c_2 \eta \]
	
	with probability $1 - \delta$ as long as, for sufficiently large constant C, 
	\[ m \geq CK^{2} \log K \cdot s \log^{3} s \cdot \log^{2}N
	\cdot \log(1/\delta) . \]

\end{theorem}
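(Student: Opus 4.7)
The plan is to deduce the recovery guarantee from a Restricted Isometry Property (RIP) for the normalized measurement matrix $\Phi \eqdef A/\sqrt{m}$, since once RIP of order $2s$ with a sufficiently small constant is in hand, the claimed error bound of the form $c_1 \sigma_s(x)_1/\sqrt{s} + c_2 \eta$ follows from the now-classical noisy $\ell_1$-recovery theorem of Candès--Romberg--Tao (or its refinement in Candès 2008). So the entire technical content is reduced to showing that the random matrix $\Phi$, with $\Phi_{ij} = \psi_j(z^i)/\sqrt{m}$, satisfies
\[
(1-\delta)\|u\|_2^2 \leq \|\Phi u\|_2^2 \leq (1+\delta)\|u\|_2^2
\]
for every $2s$-sparse $u$, with $\delta < \sqrt{2}-1$ (say), and with probability at least $1-\delta_0$, provided $m$ satisfies the stated lower bound.

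To prove the RIP, I would first observe that for any fixed support $S \subseteq [N]$ of size $2s$, the random matrix $\Phi_S^\tr \Phi_S - I_{|S|}$ is a sum of $m$ i.i.d.\ mean-zero random matrices, because orthonormality of $\{\psi_j\}$ with respect to $\D$ gives $\E[\psi_j(Z)\psi_k(Z)] = \delta_{jk}$. The $K$-boundedness controls the operator norm of each summand (each term has norm at most $K^2 |S|/m$), which lets one apply matrix Bernstein to get a pointwise-in-$S$ bound of the desired form. Naively union-bounding over all $\binom{N}{2s}$ subsets then yields an $m \gtrsim K^2 s \log N$-type bound, which is in the right ballpark but does not produce the precise $s \log^3 s \cdot \log^2 N$ form stated.

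The main obstacle, and the step that really drives the sample-complexity bound, is sharpening this uniform control. Rather than union-bounding crudely, one recasts the RIP as
\[
\delta_{2s}(\Phi) \eq \sup_{u \in D_{2s}} \bigl| \|\Phi u\|_2^2 - \|u\|_2^2 \bigr|,
\]
where $D_{2s}$ is the set of $2s$-sparse unit vectors, and bounds the expected supremum of this empirical process via Dudley's entropy integral (or generic chaining) on $D_{2s}$, combined with a suitable symmetrization argument and a Bernstein-type tail inequality. This is where the polylogarithmic factors $\log^3 s$ and $\log^2 N$ appear: two of the log factors come from the covering number of $D_{2s}$ at successive scales in the $\ell_\infty$ and $\ell_2$ metrics, and the remaining factors come from the Bernstein bracketing that relates the $\psi_\infty$ (sub-exponential) control to the $\psi_2$ norm needed for chaining. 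The $\log K$ term is the price for reducing a $K$-bounded family to a unit-bounded one.

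Finally, with RIP of order $2s$ and constant $\delta_{2s} < \sqrt{2}-1$ established with probability $1-\delta_0$, the noisy recovery guarantee follows by a black-box application of the standard RIP-based analysis of program~\eqref{alg:BPsimple}: decompose $x - x^*$ into its best $s$-term approximation and the tail; use feasibility of $x$ to bound the data-fidelity term by $\eta\sqrt{m}$, hence on $\Phi$ by $\eta$; use $\ell_1$ optimality to relate the tail of $x - x^*$ to $\sigma_s(x)_1$; and combine via RIP to obtain the conclusion with explicit constants $c_1, c_2$. No additional probabilistic ingredient is needed beyond the RIP event.
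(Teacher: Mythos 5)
This theorem is imported verbatim from Rauhut \cite{rauhut2010compressive}, so the paper contains no proof of it; your outline --- establishing RIP for the $K$-bounded random orthonormal system via chaining/Dudley-type entropy bounds and then invoking the standard RIP-based stable $\ell_1$-recovery guarantee --- is precisely the route taken in that cited source, and it is correct at the level of a sketch. The only point worth flagging is that \eqref{alg:BPsimple} is written in penalized (Lasso) form while the classical RIP-based analysis you invoke applies to the noise-constrained basis pursuit, so strictly one needs either the constrained formulation or a suitable choice of $\lambda$; the paper glosses over this equivalence in the same way you do.
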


The term $\sigma_{s}(x)_1$ is equal to $\min \{\|x-z\|_1, z \text{~is
	$s$ sparse}\}$.   Recent work \cite{Bourgain,HavivRegev} has
improved the dependence on the polylog factors in the lower bound for $m$.

\section{Basic Algorithm and Main Theoretical Results}

The main component of our spectral algorithm for hyperparameter
optimization is given in Algorithm \ref{alg:basic-harmonica}. It is essentially an
extension of sparse recovery (basis pursuit or Lasso) to the
orthogonal basis of polynomials in addition to an optimization step. See Figure \ref{fig:illustration} for an illustration. We prove \SHOR's theoretical guarantee, and show how it gives rise to new theoretical results in learning from the uniform distribution. 

\begin{figure}[t]
\centering
\begin{tikzpicture}[scale=1.1]	
\definecolor{mycolor}{RGB}{51, 153, 255}
\definecolor{mygreen}{RGB}{51, 204, 51}
\definecolor{mypurple}{RGB}{153, 0, 204}

\fill[color=mycolor] (0,0) rectangle (2,3);

\node at (2.5,1.5) {
	\fontsize{20pt}{12}\selectfont
	$\times$};
\fill [mygreen] (3,0.5) rectangle (3.4,2.5);
\node[color=white] at (3.2,1.5) {
	\fontsize{15pt}{12}\selectfont
	$\alpha$};
\node[text width=3cm] at (5.2,1.7) {
	\fontsize{10pt}{10}\selectfont
	$\alpha$ has entry $\alpha_S$ ~~~ for all $|S|\leq d$ };
\node[text width=3cm] (xx) at (0.2,-0.6) {
	\fontsize{10pt}{10}\selectfont
rows corresponds to $x_i\in \{-1,1\}^n$ };
\draw[->] (xx) edge [in=180, out=140] (-0.1,1);
\node[text width=3cm] (xx) at (1.2,3.45) {
	\fontsize{10pt}{10}\selectfont
	columns $S\subseteq [n]$ for all $|S|\leq d$ };
\fill [mypurple] (-1.4,0) rectangle (-1,3);
\node at (-0.45,1.5) {
	\fontsize{20pt}{12}\selectfont
	$=$};
\node[color=white] at (-1.2,1.5) {
	\fontsize{12pt}{12}\selectfont
	$f$};
\node[text width=4cm] at (-3.2,1.7) {
	\fontsize{10pt}{10}\selectfont
entry (i) corresponds to $f(x_i)=\sum_S \alpha_S \psi_S(x_i)$, the $i$-th measurement};
\node[color=white,text width=2cm] at (1.2,1.5) {
	\fontsize{10pt}{12}\selectfont
	entry (i,S) $=\psi_S(x_i)$};
\end{tikzpicture}	
\caption{{\small Compressed sensing over the Fourier domain: \SHOR{} recovers the Fourier coefficients of a sparse low degree polynomial $\sum_S \alpha_S \Psi_S(x_i) $ from observations $f(x_i)$ of randomly chosen points $x_i \in \{-1,1\}^n$.}}
\label{fig:illustration}
\end{figure}
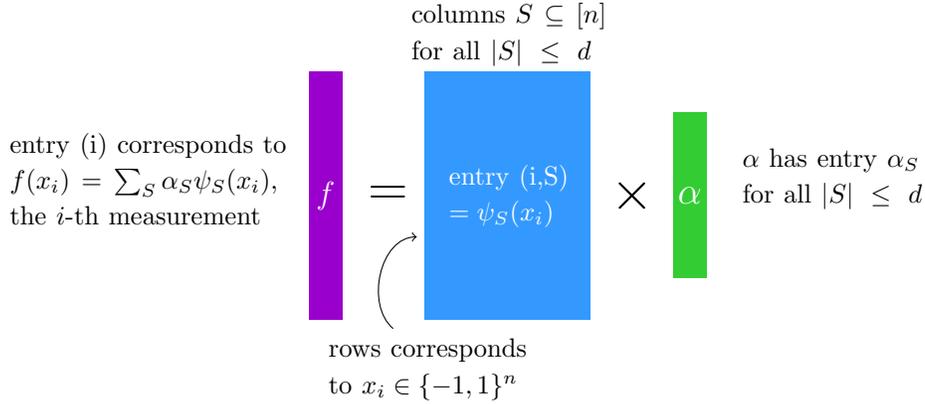

In the next section we describe
extensions of this basic algorithm to a more practical algorithm with various heuristics to improve its performance.

\begin{algorithm}[h!]
\caption{\SHOR-1}
\label{alg:basic-harmonica}
\begin{algorithmic}[1]
\STATE Input: oracle for $f$, number of samples $T$, sparsity $s$,
degree $d$,  parameter $\lambda$.
\STATE  Invoke PSR$(f,T,s,d,\lambda)$ (Procedure \ref{alg:2}) to obtain
$(g,J)$, where $g$ is a function defined on variables specified by index
set $J \subseteq [n]$.  
\STATE Set the variables in $[n] \setminus J$ to arbitrary values,  compute a minimizer $  x^\star \in  \arg \min g_i(x)$.
\RETURN $x^\star$
\end{algorithmic}
\end{algorithm}

\floatname{algorithm}{Procedure}

\begin{algorithm}[h!]
\caption{Polynomial Sparse Recovery (PSR)}
\label{alg:2}
\begin{algorithmic}[1]
\STATE Input: oracle for $f$, number of samples $T$, sparsity $s$,  degree $d$, regularization parameter $\lambda$

\STATE Query $T$ random samples: $\{  f(x_1),....,f(x_T)\} $.
\STATE Solve sparse $d$-polynomial regression over all polynomials up to degree $d$
\begin{eqnarray} \label{eqn:shalom}
\argmin _{\alpha \in \reals^{{n \choose d}}}  \left\{  \sum_{i=1}^T \left(  \sum_{|S| \leq d}   \alpha_S  \psi_S(x_i)  - f(x_i) \right)^2  + \lambda \| \alpha\|_1  \right\}
\end{eqnarray}
\STATE Let $S_1,...,S_s$ be the indices of the largest coefficients of $\vec{\alpha}$. Let $g$ be the polynomial
$$ g(x) =  \sum_{i \in [s]}   \alpha_{S_i}  \psi_{S_i}(x)   $$
\RETURN $g$ and $J = \cup_{i=1}^{s} S_{i}$
\end{algorithmic}
\end{algorithm}
\floatname{algorithm}{Algorithm}

\begin{theorem}[Noiseless recovery] \label{thm:main-harmonica1}
Let $\{\psi_{S}\}$ 
be a $K$-bounded orthonormal polynomial basis for distribution ${\cal D}$.
Let $f:\R^n \mapsto \R$  be a $(0,d,s)$-bounded function as per definition \ref{def:sparse} with respect to the basis $\psi_S$.
Then Algorithm \ref{alg:basic-harmonica}, in time $n^{O(d)}$ and sample complexity $T = \tilde{O}(K^{2} s\cdot  d \log n)$,   returns $x^\star$ such that
$$ x^\star \in \argmin f(x)$$
\end{theorem}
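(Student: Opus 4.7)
The plan is to reduce the task of finding $\arg\min f$ to a sparse recovery problem over the Fourier basis $\{\psi_S\}$, and then invoke Rauhut's theorem (Theorem~\ref{GG}) in the noiseless regime. Because the algorithm first estimates the polynomial and then minimizes over its (small) support, the argument splits cleanly into a recovery step and a combinatorial minimization step.

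First, I would set up the compressed sensing instance. Since $f$ is $(0,d)$-concentrated, its Fourier expansion is supported on subsets $S$ with $|S|\leq d$; let $N = \sum_{k=0}^{d}\binom{n}{k} = n^{O(d)}$ and let $\alpha^{*}\in\reals^{N}$ be the vector whose $S$-th coordinate is $\hat f_{S}$ for $|S|\leq d$. The $L_{1}$ bound yields $\|\alpha^{*}\|_{1}\leq s$, and (by passing through Fact~\ref{sparsefact} if needed) $\alpha^{*}$ is effectively $s$-sparse. PSR draws $T$ i.i.d.\ samples $x_{1},\ldots,x_{T}\sim\D$ and implicitly forms the sensing matrix $A\in\reals^{T\times N}$ with $A_{iS}=\psi_{S}(x_{i})$. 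The key observation is that the data vector is noiseless: $y_{i}=f(x_{i})=(A\alpha^{*})_{i}$ exactly, so I can invoke Theorem~\ref{GG} with $\eta=0$.

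Next, I would apply Theorem~\ref{GG} to the $\ell_1$-regularized program in line~(3) of PSR. Because $\{\psi_S\}$ is a $K$-bounded random orthonormal family w.r.t.\ $\D$, the hypotheses of the theorem are satisfied, and
\[
T \geq CK^{2}\log K\cdot s\log^{3}s\cdot\log^{2}N\cdot\log(1/\delta)
\]
samples suffice to recover $\hat\alpha$ with $\|\hat\alpha-\alpha^{*}\|_{2}\leq c_{1}\sigma_{s}(\alpha^{*})_{1}/\sqrt{s}$. Plugging $N=n^{O(d)}$ gives $\log^{2}N=O(d^{2}\log^{2}n)$, so the sample complexity simplifies to $T=\tilde O(K^{2}s\cdot d\log n)$ after absorbing polylog factors, matching the claim. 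In the ideal $s$-sparse case $\sigma_{s}(\alpha^{*})_{1}=0$ and recovery is exact, giving $\hat\alpha=\alpha^{*}$ and hence $g=f$ after truncation to the top $s$ coefficients.

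Finally, I would argue the minimization step: since $g$ is supported on monomials $\psi_{S_{1}},\ldots,\psi_{S_{s}}$ with $|S_{i}|\leq d$, the variables actually appearing in $g$ lie in $J=\bigcup_{i=1}^{s}S_{i}$ with $|J|\leq sd$. Because $g=f$, the variables outside $J$ do not affect the value of $f$, so setting them arbitrarily and exhaustively enumerating the (at most $2^{|J|}$) assignments of the variables in $J$ produces an $x^{\star}\in\arg\min g(x)=\arg\min f(x)$. The convex program over $N=n^{O(d)}$ variables dominates runtime, giving the claimed $n^{O(d)}$ bound.

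The main obstacle I expect is the tight handling of the passage from an $L_{1}$ bound to exact $s$-sparsity: when $\alpha^{*}$ has $\|\alpha^{*}\|_{1}\leq s$ but is not literally $s$-sparse, $\sigma_{s}(\alpha^{*})_{1}$ is nonzero and Theorem~\ref{GG} gives only an approximate recovery guarantee, which in turn may not yield an exact minimizer of $f$ but only an approximate one. The cleanest way to close this gap is to apply Fact~\ref{sparsefact} with a carefully chosen threshold so that the truncated coefficients are below the resolution at which they could alter the minimizer, or equivalently to restate the theorem assuming $f$ is itself $s$-sparse (with $\|\alpha^{*}\|_{1}\leq s$ playing the role of a norm bound used inside the basis-pursuit analysis). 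The rest of the argument is bookkeeping on sample size and support enumeration.
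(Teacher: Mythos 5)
Your proposal is correct and follows essentially the same route as the paper's own proof: enumerate the $N=n^{O(d)}$ basis polynomials of degree at most $d$, observe that the measurements $y_i=f(x_i)$ are exactly $A\alpha^*$ so Theorem~\ref{GG} applies with $\eta=0$ and $\sigma_s(\alpha^*)_1=0$ to recover the coefficients exactly, and then minimize the exactly-recovered polynomial over its support. The $L_1$-versus-exact-sparsity subtlety you flag is real, but the paper's proof resolves it the same way you suggest, by reading the $(0,d,s)$-bounded hypothesis as saying $f$ is an $s$-sparse degree-$d$ polynomial.
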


This theorem, and indeed most of the theoretical results of this
paper, follow from the main recovery properties of Procedure
\ref{alg:2}. Our main technical lemma follows the same outline of the
compressed sensing result due to Stobbe and Krause \cite{StobbeKrause}
but with a generalization to functions that are {\em approximately}
sparse and low-degree:

\begin{lemma}[Noisy recovery] \label{thm:main}
Let $\{\psi_{S}\}$ 
be a $K$-bounded orthonormal polynomial basis for distribution ${\cal D}$.
 Let  $f:\R^n \mapsto \R$ be a $(\eps/4, d, s)$-bounded as per definition \ref{def:sparse} with respect to the basis $\psi_S$.
Then Procedure \ref{alg:2} finds a function $g \equiv_\epsilon f$  in time $O( n^d)$ and sample
	complexity $T = \tilde{O}(K^2 s^2 / \epsilon \cdot d\log n)$.
\end{lemma}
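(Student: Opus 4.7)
The plan is to cast Procedure \ref{alg:2} as an instance of the random-orthonormal-family sparse recovery setup of Theorem \ref{GG}, applied to the restricted family $\{\psi_S : |S| \leq d\}$ of size $N = O(n^d)$. I would first split $f$ into its low-degree part and a tail, $f = f^{\leq d} + r$ where $f^{\leq d} = \sum_{|S| \leq d} \hat f(S)\,\psi_S$ and $r = f - f^{\leq d}$. By the $(\eps/4, d)$-concentration hypothesis, $\E_{{\cal D}}[r(x)^2] \leq \eps/4$. Let $\alpha \in \R^N$ denote the coefficient vector of $f^{\leq d}$ in the restricted basis, and let $A \in \R^{T\times N}$ be the random matrix $A_{iS} = \psi_S(x_i)$. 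Then the observation vector $y_i = f(x_i)$ satisfies $y = A\alpha + e$, with $e_i = r(x_i)$. This is exactly the noisy sparse recovery model, and the optimization \eqref{eqn:shalom} in Procedure \ref{alg:2} is the program \eqref{alg:BPsimple} up to the choice of $\lambda$.

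Next I would bound the effective noise level $\eta$. Since $\E[e_i^2] \leq \eps/4$ and the $x_i$'s are i.i.d., a Markov (or Bernstein) inequality gives $\|e\|_2 \leq \sqrt{T}\cdot \sqrt{\eps/2}$ with probability at least $1-\delta/2$ on a sample of size $T$, so Theorem \ref{GG} applies with $\eta = \sqrt{\eps/2}$. For the target sparsity in Theorem \ref{GG} I would choose $s' = \Theta(s^2/\eps)$, motivated by Fact \ref{sparsefact}: since $L_1(\alpha) \leq L_1(f) \leq s$, the best $s'$-sparse approximation of $\alpha$ in $L_1$ satisfies $\sigma_{s'}(\alpha)_1 \leq s$ (one can even show this by the thresholding construction in Fact \ref{sparsefact}). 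Plugging into Theorem \ref{GG} gives a recovered $\alpha^\star$ with
\[ \|\alpha - \alpha^\star\|_2 \;\leq\; c_1\,\frac{s}{\sqrt{s'}} + c_2 \sqrt{\eps/2} \;=\; O(\sqrt{\eps}). \]

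To translate this back into $L_2$ error on $f$, I would apply Parseval: the function $\tilde g(x) = \sum_{|S| \leq d} \alpha^\star_S \psi_S(x)$ satisfies $\E[(f^{\leq d} - \tilde g)^2] = \|\alpha - \alpha^\star\|_2^2 = O(\eps)$, and combining with $\E[r^2] \leq \eps/4$ via the triangle inequality in $L_2(\D)$ yields $\E[(f - \tilde g)^2] = O(\eps)$. A short additional argument handles the top-$s$ thresholding step of Procedure \ref{alg:2}: because the discarded coefficients are at most $\eps/s$ in magnitude each and have $L_1$ mass at most $s$, dropping them contributes at most $\eps$ to the $L_2$ error, so the returned $g$ is still $\eps$-close to $f$ up to rescaling $\eps$ by an absolute constant. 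The sample bound of Theorem \ref{GG} becomes $T = \tilde O(K^2 s' \log^2 N) = \tilde O(K^2 s^2/\eps \cdot d\log n)$, matching the lemma statement, and the runtime is dominated by solving the LASSO in $N = O(n^d)$ variables.

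The main obstacle I anticipate is the passage from an expected bound on the noise $\E[e_i^2] \leq \eps/4$ to the high-probability $\ell_2$ bound $\|e\|_2 \leq \sqrt{T\eps/2}$ required to apply Theorem \ref{GG}; this is standard but needs care because $r$ is not assumed bounded, only bounded in $L_2$. A subtler point is handling the thresholding step cleanly — one has to argue that replacing the full LASSO solution by its top-$s'$ coefficients does not inflate the $L_2$ error beyond constants, using the $L_1$ bound on $\alpha$ together with the recovery guarantee on $\alpha^\star$.
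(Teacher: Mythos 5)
Your proposal is correct and follows the same overall skeleton as the paper's proof --- reduce to Rauhut's guarantee (Theorem~\ref{GG}) over the $N=n^{O(d)}$ basis functions of degree at most $d$, control the noise vector through its second moment plus a Markov/Chebyshev argument, and convert the coefficient-space $\ell_2$ bound back to function space via Parseval and the triangle inequality --- but the decomposition you use is genuinely different. The paper writes $f = h + g$ where $h$ keeps only the low-degree coefficients of magnitude at least $\eps/4s$, so that the target vector is \emph{exactly} $(4s^2/\eps+1)$-sparse (via Fact~\ref{sparsefact}), $\sigma_{s'}(x)_1 = 0$, and the discarded small low-degree coefficients are pushed into the stochastic noise $e_i = g(z^i)$ together with the degree-$>d$ tail (their combined variance being at most $\eps/2$). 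You instead keep the \emph{entire} low-degree part as the target and invoke the $c_1\,\sigma_{s'}(\alpha)_1/\sqrt{s'}$ stability term of Theorem~\ref{GG} with $s' = \Theta(s^2/\eps)$ and the trivial bound $\sigma_{s'}(\alpha)_1 \le L_1(f) \le s$, so only the degree-$>d$ tail enters the noise. Both routes give $O(\sqrt{\eps})$ coefficient error and the same sample complexity; yours uses the compressible-signal part of the recovery theorem that the paper leaves idle, while the paper's keeps the recovery step in the exactly-sparse regime at the cost of a slightly larger noise variance. Two small caveats: your claimed failure probability $\delta/2$ for $\|e\|_2 \le \sqrt{T\eps/2}$ does not follow from Markov alone (and Bernstein is unavailable since $r$ is only controlled in $L_2$); Markov gives constant success probability, which is exactly what the paper settles for before boosting by repetition. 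Also, your truncation step should be argued via the fact that the top-$s'$ truncation of $\alpha^\star$ is its best $s'$-sparse $\ell_2$-approximation (compare it to the thresholded version of $\alpha$ and apply the triangle inequality), rather than via per-coefficient magnitude bounds on $\alpha^\star$, which you do not control; note the paper's own proof silently identifies the output with the full Lasso solution and skips this step entirely.
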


\ignore{
\noindent {\bf Remark:} In this paper we focus on distributions ${\cal D}$ that are
products of {\em discrete} univariate distributions. In this case the basis of orthonormal polynomials is the
set of all parity functions, see Section \ref{sec:fourier}.   However, our method  encompasses scenarios where $\mu_i$ can be continuous (e.g.,
Gaussian).   
}


In the rest of this section we proceed to prove the main lemma and derive the theorem.   Recall the Chebyshev inequality:

\begin{fact}[Multidimensional Chebyshev inequality] \label{fact:cheb} 
	Let $X$ be an $m$ dimensional random vector, with expected value $\mu=\E[X]$, and covariance matrix $V=\E[(X-\mu)(X-\mu)^T]$. 
	
	If $V$ is a positive definite matrix, for any real number
	$\delta > 0$:
	\[
	\Pr(\sqrt{(X-\mu)^T V^{-1}(X-\mu)}>\delta)\leq \frac{m}{\delta^2}
	\]
\end{fact}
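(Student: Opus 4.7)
The plan is to reduce the multidimensional statement to a one-dimensional Markov inequality by whitening the random vector $X-\mu$. Since $V$ is positive definite, it admits a symmetric positive definite square root $V^{1/2}$ with inverse $V^{-1/2}$. Define the whitened vector $Y \eqdef V^{-1/2}(X - \mu)$. Then $Y$ has mean zero and
\[
\E[Y Y\tr] \eq V^{-1/2}\,\E[(X-\mu)(X-\mu)\tr]\, V^{-1/2} \eq V^{-1/2} V V^{-1/2} \eq I_m,
\]
so the covariance of $Y$ is the identity.

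Next I would rewrite the event in the conclusion in terms of $Y$. A direct computation gives
\[
\|Y\|_2^2 \eq Y\tr Y \eq (X-\mu)\tr V^{-1/2} V^{-1/2}(X-\mu) \eq (X-\mu)\tr V^{-1}(X-\mu),
\]
so the event $\{\sqrt{(X-\mu)\tr V^{-1}(X-\mu)} > \delta\}$ coincides with $\{\|Y\|_2^2 > \delta^2\}$.

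Then I would compute the expected squared norm of $Y$ using the trace trick:
\[
\E[\|Y\|_2^2] \eq \E[\trace(Y\tr Y)] \eq \E[\trace(Y Y\tr)] \eq \trace(\E[Y Y\tr]) \eq \trace(I_m) \eq m.
\]
Applying Markov's inequality to the nonnegative random variable $\|Y\|_2^2$ yields
\[
\Pr(\|Y\|_2^2 > \delta^2) \leq \frac{\E[\|Y\|_2^2]}{\delta^2} \eq \frac{m}{\delta^2},
\]
which is exactly the claim after taking square roots.

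The argument is essentially mechanical; the only place where care is needed is the justification that $V^{1/2}$ and $V^{-1/2}$ exist and are symmetric, which follows immediately from the positive definiteness hypothesis via the spectral theorem. There is no real obstacle — the multidimensional inequality is just the scalar Chebyshev/Markov inequality applied to $\|V^{-1/2}(X-\mu)\|_2^2$, whose expectation equals the ambient dimension $m$ precisely because whitening normalizes the covariance to the identity.
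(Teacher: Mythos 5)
Your proof is correct. The paper states this as a background fact and gives no proof of its own, so there is nothing to compare against; your whitening argument (reduce to Markov's inequality applied to $\|V^{-1/2}(X-\mu)\|_2^2$, whose expectation is $\mathrm{tr}(I_m)=m$) is the standard derivation and is complete, with the existence of a symmetric positive definite $V^{-1/2}$ properly justified by the spectral theorem.
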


\begin{proof}[Proof of Lemma \ref{thm:main}]
For ease of notation we assume $K = 1$.
Let $f$ be an $(\epsilon/4, s,
d)$-bounded function written in the orthonormal basis as  $\sum_{S} \hat{f}(S) \psi_{S}$.  We can equivalently write $f$ as 	$f = h + g$, where $h$ 
is a degree $d$ polynomial that  only includes coefficients of
magnitude at least $\eps/4s$ and the constant term of the polynomial expansion of $f$.
	
Since $ L_1(f) =  \sum_S |\hat{f}_S |  \leq s$, by Fact
\ref{sparsefact} we have that $h$ is $4s^2/\eps + 1$ sparse. The function $g$ is thus the sum of the remaining $\hat{f}(S)\psi_{S}$ terms not included in $h$.

Draw $m$ (to be chosen later) random labeled examples
$\{(z^1,y^1),\ldots,(z^m,y^m)\}$ and enumerate all $N = n^d$ basis
functions $\psi_{S}$ for $|S| \leq d$ as $\{\psi_{1},\ldots,\psi_N\}$.
Form matrix $A$ such that $A_{ij} = \psi_{j}(z^i)$ and consider the
problem of recovering $4s^2/\eps + 1$ sparse $x$ given $Ax + e  =
y $ where $x$ is the vector of coefficients of $h$, the $i$th
entry of $y$ equals $y^i$, and $e_{i} = g(z^i)$.

We will prove that with constant probability over the choice $m$
random examples, $\|e\|_2 \leq \sqrt{\epsilon m}$.
Applying Theorem \ref{GG} by setting $\eta=\sqrt{\epsilon}$ and observing that
$\sigma_{4s^2/\epsilon + 1}(x)_1 = 0$, we will recover $x'$ such that $\|x -
x'\|_2^{2} \leq c_2^2 \epsilon$ for some constant $c_2$.   As such, for the function $\tilde{f} =
\sum_{i=1}^{N} x'_{i} \psi_i$ we will have $\E[\|h - \tilde{f}\|^2] \leq
c_2^2 \epsilon$ by Parseval's identity.  Note, however, that we may rescale $\epsilon$ by
constant factor $1/(2c_2^2)$ to obtain error $\epsilon/2$ and only incur an additional
constant (multiplicative) factor in the sample complexity bound. 

By the definition of $g$, we have 
	
\begin{equation}
 \|g\|^2	= \left ( 	\sum_{S, |S| > d} \hat{f}(S)^2 + \sum_{R} \hat{f}(R)^2\right ) 
 \label{eqn:g_var}
\end{equation}

where each $\hat{f}(R)$ is of magnitude at most $\eps/4s$.   By Fact
\ref{sparsefact} and Parseval's identity we have $\sum_{R}
\hat{f}(R)^2 \leq \epsilon/4$.  Since $f$ is $(\epsilon/4, d)$-concentrated we have $\sum_{S, |S| > d} \hat{f}(S)^2 \leq
\epsilon/4$.  Thus,  $\|g\|^2$ is at most $\epsilon/2$. 
Therefore, by triangle inequality $
\E[\|f - \tilde{f}\|^2]\leq 
\E[\|h - \tilde{f}\|^2]
+\E[\|g\|^2]\leq \epsilon$.
	
It remains to bound $\|e \|_2$.  Note that since the examples are
chosen independently, the entries $e_i = g(z^i)$ are independent
random variables.  Since $g$ is a linear combination of orthonormal
monomials (not including the constant term), we have $\E_{z \sim
D}[g(z)] = 0$.  Here we can apply linearity of variance (the covariance of
$\psi_{i}$ and $\psi_{j}$ is zero for all $i \neq j$) and calculate the variance
	
\[ \Var(g(z^i))	= ( 	\sum_{S, |S| > d} \hat{f}(S)^2 + \sum_{R} \hat{f}(R)^2) \]

With the same calculation as (\ref{eqn:g_var}), we know 
 $\Var(g(z^i) )$ is at most $\epsilon/2$. 
	
Now consider the covariance matrix $V$ of the vector $e$
	which equals $\E[ e e^\top ] $ (recall every entry
	of $e $ has mean $0$).  Then $V$ is a diagonal matrix (covariance between two independent samples is zero), and every diagonal entry is at most $\eps/2$. 	
	Applying Fact \ref{fact:cheb} we have 
	\[
	\Pr(\|e \|_2 >\sqrt{\frac{\epsilon}{2}} \delta)\leq \frac{m}{\delta^2}.
	\]
	
	Setting $\delta = \sqrt{2 m}$, we conclude that 
	$\Pr(\|e \|_2 > \sqrt{\eps m}) \leq \frac12$.  Hence with probability at least $1/2$, we have that $\|e \|_2 \leq
	\sqrt{\epsilon m}$. From Theorem \ref{GG}, we may choose $m =
	\tilde{O}(s^2/\epsilon \cdot  \log n^d)$.  This completes the proof.
        Note that the probability $1/2$ above can be boosted to any
        constant probability with a constant factor loss in sample complexity. \qedhere

\end{proof}

\noindent{\bf Remark:} Note that the above proof also holds in the
{\em adversarial} or {\em agnostic} noise setting. That is, an adversary could add a
noise vector $v$ to the labels received by the learner.  In this case,
the learner will see label vector $y = Ax + e + v$.  If
$\|v\|_2 \leq \sqrt{\gamma m}$, then we will recover a polynomial with
squared-error at most $\eps + O(\gamma)$ via re-scaling $\epsilon$ by a
constant factor and applying the triangle inequality to $\|e +
v\|_2$. \\

While this noisy recovery lemma is the basis for our enhanced
algorithm in the next section as well as the learning-theoretic result
on learning of decision trees detailed in the next subsection, it does
not imply recovery of the global optimum. The reason is that noisy
recovery guarantees that we output a hypothesis {\em close} to the
underlying function, but even a single noisy point can completely
change the optimum.

Nevertheless, we can use our techniques to prove recovery of optimality for functions that are computed {\it exactly} by a sparse, low-degree polynomial.

\begin{proof}[Proof of Theorem \ref{thm:main-harmonica1}]
	There are at most $N = n^{d}$ polynomials $\psi_{S}$ with $|S| \leq d$.
	Let the enumeration of these polynomials be $\psi_{1},\ldots,
	\psi_{N}$.  Draw $m$ labeled examples $\{(z^1,y^1), \ldots,(z^m,y^m)\}$
	independently from ${\cal D}$ and
	construct an $m \times N$ matrix $A$ with $A_{ij} =
	\psi_{j}(z^i)$.   Since $f$ can be written as an $s$ sparse linear
	combination of $\psi_{1},\ldots,\psi_{N}$, there exists an
	$s$-sparse vector $x$ such that $Ax = y$ where the $i$th entry of
	$y$ is $y^i$.  Hence we can apply Theorem \ref{GG} to recover $x$
	exactly. These are the $s$ non-zero coefficients of
	$f$'s expansion in terms of $\{\psi_{S}\}$.  Since $f$ is recovered exactly, its minimizer is found in the optimization step.
\end{proof}


\subsection{Application: Learning Decision Trees in Quasi-polynomial Time
and Polynomial Sample Complexity} \label{sec:dt}

Here we observe that our results imply new bounds for decision-tree
learning.  
For example, we obtain the first quasi-polynomial
time algorithm for learning decision trees with respect to the uniform
distribution on $\{-1,1\}^n$ with polynomial sample complexity and an
optimal dependence on the error parameter $\epsilon$:

\begin{corollary}
	Let ${\cal X} = \{-1,1\}^n$ and let ${\cal C}$ be the class of all
	decision trees of size $s$ on $n$ variables.  Then ${\cal C}$ is
	learnable with respect to the uniform distribution in time $n^{O(\log
		(s/\epsilon))}$ and sample complexity $m = \tilde{O}(s^{2}
              /\epsilon \cdot \log n)$.   Further, if the labels are
              corrupted by arbitrary noise vector $v$ such that
              $\|v\|_2 \leq \sqrt{\gamma m}$, then the output
              classifier will have squared-error at most $\epsilon +
              O(\gamma)$. 
\end{corollary}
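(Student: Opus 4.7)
The plan is to show that any size-$s$ decision tree $T: \{-1,1\}^n \to \{-1,1\}$ is $(\epsilon/4, d, s)$-bounded for $d = O(\log(s/\epsilon))$ with respect to the parity basis (i.e., the uniform distribution on $\{-1,1\}^n$), and then invoke Lemma \ref{thm:main} directly. Since the parity basis is $K=1$-bounded, the lemma will give running time $n^{O(d)} = n^{O(\log(s/\epsilon))}$ and sample complexity $\tilde{O}(s^{2}/\epsilon \cdot d \log n) = \tilde{O}(s^{2}/\epsilon \cdot \log n)$, matching the claimed bounds (logarithmic factors in $\log(s/\epsilon)$ get absorbed into the $\tilde{O}$).

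First I would establish the $L_1$ bound: writing the decision tree as a sum over leaves $T(x) = \sum_{\ell} v_\ell \cdot \mathbf{1}[x \text{ reaches } \ell]$, each leaf indicator is a product of $\tfrac{1}{2}(1 \pm x_i)$ terms, which has $L_1$-norm exactly $1$ in the parity basis, and each leaf value $v_\ell \in [-1,1]$. By the triangle inequality on Fourier coefficients, $\sum_S |\hat{T}(S)| \le \sum_\ell |v_\ell| \le s$. Next I would invoke the classical Fourier concentration bound for decision trees (originally due to Linial--Mansour--Nisan and sharpened by Mansour): any size-$s$ decision tree satisfies $\sum_{|S| > d} \hat{T}(S)^2 \le \epsilon/4$ for $d = O(\log(s/\epsilon))$. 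Combining these two facts is exactly the definition of $(\epsilon/4, d, s)$-bounded.

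With the boundedness property in hand, applying Lemma \ref{thm:main} to Procedure \ref{alg:2} yields a hypothesis $g \equiv_\epsilon T$ in the required time and sample budget. For the agnostic part of the statement, I would invoke the remark immediately following the proof of Lemma \ref{thm:main}: if the labels are perturbed by an adversarial vector $v$ with $\|v\|_2 \le \sqrt{\gamma m}$, then the triangle inequality on $\|e + v\|_2$ (where $e$ comes from the tail of the Fourier expansion as in the lemma's proof) combined with rescaling $\epsilon$ by a constant yields squared-error at most $\epsilon + O(\gamma)$.

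The only nontrivial ingredient here is the concentration bound for decision trees, which is the main technical obstacle in principle but which I would simply cite from \cite{LMN,MansourSurvey}; everything else is bookkeeping about the parameters and a direct application of Lemma \ref{thm:main}. Note that the optimization step in Algorithm \ref{alg:basic-harmonica} is irrelevant here since we only care about PAC-learning $T$, not minimizing it, so only Procedure \ref{alg:2} is used.
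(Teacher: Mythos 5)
Your proposal is correct and follows essentially the same route as the paper: establish that a size-$s$ tree is $(\epsilon/4,d,s)$-bounded via the $L_1$ bound and the Linial--Mansour--Nisan/Mansour concentration fact, then apply Lemma \ref{thm:main} and the remark following it for the adversarial-noise claim; your explicit leaf-indicator argument just spells out the fact the paper cites from \cite{MansourSurvey}. The only small ingredient the paper includes that you omit is the conversion from squared-error to classification error (if $\E[(h-f)^2]\leq\epsilon$ for Boolean $f$ then $\Pr[\mathsf{sign}(h(x))\neq f(x)]\leq\epsilon$), which is needed to phrase the guarantee as learning.
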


\vspace{-0.6em}
\begin{proof}
	As mentioned earlier, the orthonormal polynomial basis for the class
	of Boolean functions with respect to the uniform distribution on
	$\{-1,1\}^n$ is the class of parity functions $\{\chi_{S}\}$
	for $S \subseteq \{-1,1\}^n$.  Further, it is easy to show that
        for Boolean function $f$, if
        $\E[(h - f)^2] \leq \epsilon$ then $\Pr[\mathsf{sign}(h(x))
        \neq f(x)] \leq \epsilon$.   The corollary now follows by applying Lemma
	\ref{thm:main} and two known structural facts about decision trees:
	1) a tree of size $s$ is
	$(\eps, \log(s/\eps))$-concentrated and has $L_1$ norm bounded by
	$s$ (see e.g., Mansour \cite{MansourSurvey}) and 2) by Fact
        \ref{sparsefact}, for any function $f$ with $L_1$ norm bounded by $s$
        (i.e., a decision tree of size $s$), there exists an
        $s^2/\eps$ sparse function $g$ such that $\E[(f-g)^2] \leq
        \epsilon$.  The noise tolerance property follows immediately from the
        remark after the proof of Lemma \ref{thm:main}.
\end{proof}

\noindent{\bf Comparison with the ``Low-Degree'' Algorithm}.  Prior
work for learning decision trees (more generally Boolean functions
that are approximated by low-degree polynomials) used the celebrated
``low-degree'' algorithm of Linial, Mansour, and Nisan \cite{LMN}.
Their algorithm uses random sampling to estimate each low-degree
Fourier coefficient to high accuracy.  In contrast, the
compressed-sensing approach of Stobbe and Krause \cite{StobbeKrause}
takes advantage of the incoherence of the design matrix and gives
results that seem unattainable from the ``low-degree'' algorithm.

For learning noiseless, Boolean decision trees, the low-degree
algorithm uses quasipolynomial time and sample complexity
$\tilde{O}(s^2/\eps^2 \cdot \log n)$ to learn to accuracy $\epsilon$.
It is not
clear, however, how to obtain any noise tolerance from their approach.

For general real-valued decision trees where $B$ is an upper bound on
the maximum value at any leaf of a size $s$ tree, our algorithm will
succeed with sample complexity $\tilde{O}(B^2s^2/\epsilon \cdot \log
n)$ and be tolerant to noise while the low-degree algorithm will use
$\tilde{O}(B^4s^2/\epsilon^2 \cdot \log n)$ (and will have no
noise tolerance properties).  Note the improvement in the dependence
on $\epsilon$ (even in the noiseless setting), which is a consequence
of the RIP property of the random orthonormal family.





\section{Harmonica: The Full Algorithm}

Rather than applying Algorithm \ref{alg:basic-harmonica} directly, we found that performance is greatly enhanced by iteratively using Procedure \ref{alg:2} to estimate the most influential hyperparameters and their optimal values. 

In the rest of this section we describe this iterative heuristic,
which essentially runs Algorithm \ref{alg:basic-harmonica} for
multiple stages.  More concretely, we continue to invoke the PSR subroutine until the
search space becomes small enough for us to use a
``base'' hyperparameter optimizer (in our case either SH or
Random Search).  

The space of minimizing assignments to a multivariate polynomial is a
highly non-convex set that may contain many distinct points.  As such,
we take an average of several of the best minimizers (of subsets of
hyperparameters) during each stage. 

In order to describe this formally we need the following definition
of a restriction of function:

\begin{definition}[restriction  \cite{booleananalysis}]
	Let $f\in \{-1,1\}^{n}\mapsto \R$, 
$J\subseteq [n]$, and $z\in \{-1,1\}^{ J}$ be given.
	We call $(J,z)$ a restriction pair of function $f$.  We denote $f_{J,z}$  the function over $n-|J|$ variables given by setting the variables of $J$ to $z$. 
\end{definition}

We can now describe our main algorithm (Algorithm
\ref{alg:alg1}). Here $q$ is the number of stages for which we apply
the PSR subroutine, and the restriction size $t$ serves as a
tie-breaking rule for the best minimizers (which can be set to $1$).

\begin{algorithm}[h!]
\caption{\SHOR-$q$}
\label{alg:alg1}
\begin{algorithmic}[1]
\STATE Input: oracle for $f$, number of samples $T$, sparsity $s$,
degree $d$, regularization parameter $\lambda$, number of stages $q$,
restriction size $t$, base hyperparameter optimizer ALG.
\FOR {stage $i=1$ to $q$}
\STATE  Invoke PSR$(f,T,s,d,\lambda)$ (Procedure \ref{alg:2}) to obtain
$(g_i,J_i)$, where $g_i$ is a function defined on variables specified by index
set $J_i \subseteq [n]$.  
\STATE Let $M_i = \{ x^\star_1,...,x^\star_t \} =  \arg \min g_i(x)$
be the best $t$ minimizers of $g_i$.  
\STATE Let $f_i = \E_{k \in [t]} [ f_{J_i,x^\star_k} ]  $ be the
expected restriction of $f$ according to minimizers
$M_i$.\footnotemark 
\STATE  Set $f = f_i$.


\ENDFOR

\RETURN Search for the global minimizer of $f_q$ using base optimizer ALG 
\end{algorithmic}
\end{algorithm}

\subsection{Algorithm attributes and heuristics}
\label{sec:appendix:alg_compare}

\paragraph{Scalability.}
If the hidden function
if $s$-sparse, \SHOR{} can find such a sparse function using
$\tilde {O}(s \log s)$ samples. If 
at every stage of \SHOR{}, the target function can be approximated by
an $s$ sparse function, we only need $\tilde {O}(qs \log s)$ samples
where $q$ is the number of stages. For real world applications such as
deep neural network hyperparameter tuning, it seems (empirically)
reasonable to assume that the hidden function is indeed sparse at
every stage (see Section \ref{sec:exp:resnet}).

For Hyperband \cite{hyperband}, SH \cite{successive} or Random Search, even if the function is  $s$-sparse, 
in order to cover the optimal configuration by random sampling, we
need $\Omega(2^{s})$ samples. 

\footnotetext{In order to evaluate $f_i$, 
	we first sample $k\in [t]$ to obtain $f_{J_i,x_k^*}$, and then 
	evaluate $f_{J_{i},x_k^*}$.
}

\paragraph{Optimization time.}
\SHOR{} runs the Lasso \cite{lasso} algorithm after each stage to solve (\ref{eqn:shalom}), which is a well studied convex optimization problem and has very fast implementations.
Hyperband and SH are also efficient in terms of running time as a function of the number of function evaluations, and require sorting or other simple computations. 
The running time of Bayesian optimization 
is cubic in number of function evaluations, which limits applicability
for large number of evaluations / high dimensionality, as we shall see in Section \ref{sec:synthetic_func}.

\paragraph{Parallelizability.}
\SHOR{}, similar to Hyperband, SH, and Random Search, has straightforward parallel implementations. In every stage of those algorithms, we could simply evaluate the objective functions over randomly chosen points in parallel. 

It is hard to run Bayesian optimization algorithm in parallel due to its inherent serial nature.
Previous works explored variants in which multiple points are evaluated at the same time in parallel \cite{parallelbo}, 
though speed ups do not grow linearly in the number of machines, and the batch size is usually limited to a small number.

\paragraph{Feature Extraction.}
\SHOR{}  is able to extract important features with weights in each stages, which automatically sorts all the features according to their importance. 
See Section \ref{sec:appendix:important_feature_table}.


\ignore{
\subsection{Extension to the hierarchical case} 

Thus far our main Theorem \ref{thm:main} has only considered bounded (or sparse) functions in the orthogonal polynomial basis, and only applied the degenerate case of Algroithm \ref{alg:alg1} with $q=1$ stage. 

Similarly to the proof of Theorem \ref{thm:main}, we can prove the following about \SHOR:

\begin{theorem}\label{thm:main2}
Fix ${\cal D}$ and associated $K$-bounded orthonormal polynomial
family $\{H_{S}\}$.  Assume $f:\R^n \mapsto \R$ is $(\eps, d, s )$-heirarchically-sparse as per definition \ref{def:hei}. 
Then Algorithm \ref{alg:alg1} with $q = n/s$ finds the global minimizer function $g \equiv_\epsilon f$  in time $O( n^d)$ and sample
	complexity $T = \tilde{O}(K^2 n  s \log N / \epsilon)$.
\end{theorem}
}

%

\ignore{

\begin{table}
	\caption{{\small  Comparison. $n:$ \#hyperparameters.  $m:$ \#samples. $d:$ degree of features}}
	
	\label{tab:compare}
	\centering
{\small
	\begin{tabular}{|c|c|c|c|c|}
		\hline
		Properties	 & \SHOR{} & Bayesian opt & Hyperband & Random Search\\
		\hline 
		Scalability in $n$
		& Best & Fair & Better &Better\\
		\hline
		Optimization time&
		$O\left (\frac{n^d m}\epsilon\right )$ & 
		Very slow & 
		$O(m\log m)$&O(1)\\
		\hline
		Parallelizable? & Yes & Difficult & Yes &Yes\\
		\hline
		Feature extraction? & Yes & No & No& No \\
		\hline
	\end{tabular} 
}
\end{table}
}


\section{Experiments with training deep networks} 
\label{sec:exp:resnet}
We compare \SHOR{}\footnote{A python implementation of \SHOR{} can be found at \url{
		https://github.com/callowbird/Harmonica
}} with Spearmint\footnote{\url{https://github.com/HIPS/Spearmint.git}} \cite{bayesianOPT},  Hyperband, SH\footnote{We implemented a parallel version of Hyperband and SH in Lua.}  and Random Search. 
Both Spearmint and Hyperband are state-of-the-art algorithms, and it is observed that Random Search 2x
(Random Search with doubled function evaluation resources)
is a very competitive benchmark that beats many algorithms\footnote{E.g., see \cite{recht1,recht2}. }.

Our first experiment is over training residual network on Cifar-10 dataset\footnote{\url{https://github.com/facebook/fb.resnet.torch}}.
We included $39$ binary hyperparameters,  including 
initialization, optimization method, learning rate schedule, momentum
rate, etc.  Table \ref{tab:options} (Section
\ref{sec:appendix:options}) details the hyperparameters considered. We
also include $21$ dummy variables to make the task more
challenging. Notice that Hyperband, SH, and Random Search 
are agnostic
to the dummy variables in the sense that they just set the value of dummy variables randomly, therefore select essentially the same set of configurations with or without the dummy variables.  Only Harmonica and Spearmint are sensitive to the dummy variables as they try to learn the high dimensional function space. To make a fair comparison, we run Spearmint without any dummy variables.

As most hyperparameters have a consistent effect as the network becomes deeper, 
a common hand-tuning strategy  is 
``tune on small network, then apply the knowledge to big network'' (See discussion in Section \ref{sec:tune_small}).
\SHOR{} can also exploit this strategy as it selects important features stage-by-stage. More specifically, 
during the feature selection stages, 
we run \SHOR{} for tuning an $8$ layer neural network with $30$ training epochs. 
At each stage, we take
$300$ samples to extract $5$ important features, and set restriction size $t=4$ (see Procedure \ref{alg:2}).
 After that, we fix all the important
 features, and run the
SH or Random Search as our base algorithm on the big $56$ layer neural network for training the whole $160$ epochs\footnote{Other algorithms like Spearmint, Hyperband, etc. can be used as the base algorithms as well. }. To clarify, ``stage'' means the stages of the hyperparameter algorithms, while ``epoch'' means the epochs for training the neural network. 

\subsection{Performance}

	\begin{figure}
	\centering
	\includegraphics[width=\textwidth]{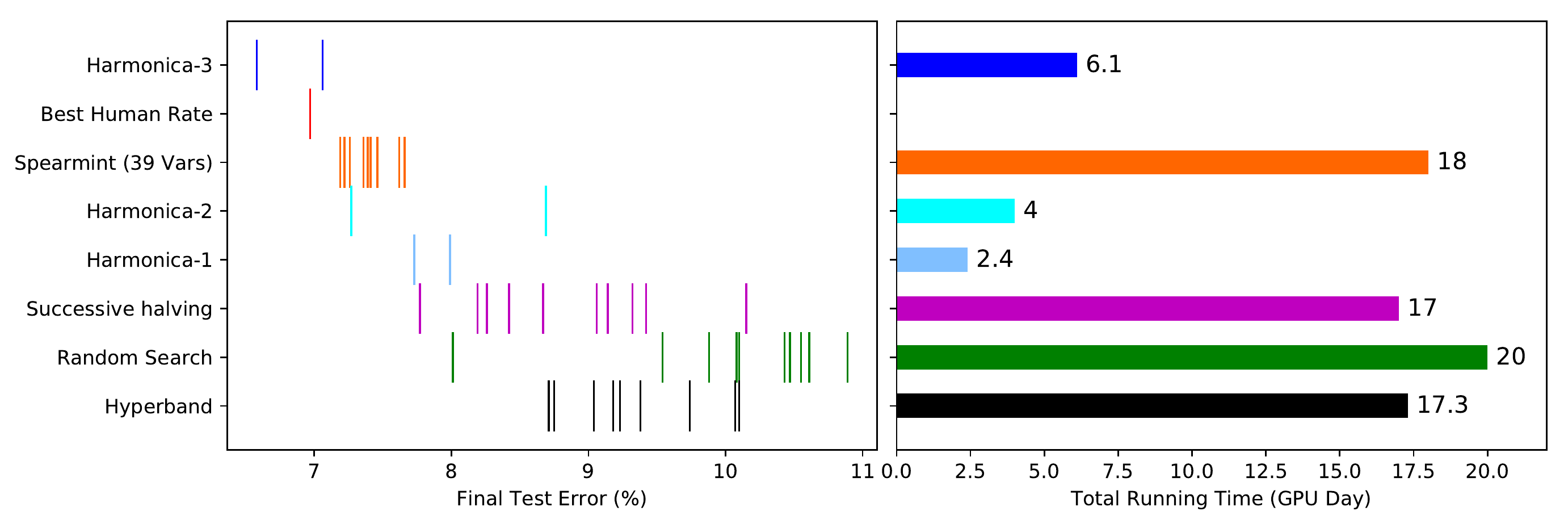}
	\caption{{\small Distribution of the best results and running time of different algorithms}}
	\label{fig:best}	
	\end{figure}

We tried three versions of \SHOR{} for this experiment, \SHOR{} with 
1 stage (\SHOR{}-1), 2 stages (\SHOR{}-2)
and 3 stages (\SHOR{}-3). All of them use 
SH as the base algorithm. 
The top $10$ test error results and running times of the different  algorithms are depicted in Figure \ref{fig:best}. SH based algorithms may return fewer than 10 results.
For more runs of variants of Harmonica and its resulting test error, see Figure \ref{fig:hyper_best}	 (the results are similar to Figure \ref{fig:best}).


\paragraph{Test error and scalability:} \SHOR{}-1 uses less than 
$1/7$ time of Hyperband and $1/8$ time compared with Random Search,
but gets better results than the competing algorithms. It beats the Random Search $8$x benchmark (stronger than Random Search 2x benchmark of \cite{hyperband}). \SHOR{}-2 uses slightly more time, but is able to find better results, which are comparable with Spearmint with $4.5\times$ running time. 

\paragraph{Improving upon human-tuned parameters:} 
\SHOR{}-3 obtains a better test error ($6.58\%$) as compared to the best hand-tuning rate $6.97\%$ reported in \cite{resnet}\footnote{
$6.97\%$ is the rate obtained by residual network, 
and there are new network structures like wide residual network \cite{wideresnet} or densenet \cite{densenet} that achieve better rates for Cifar-10.}. \SHOR{}-3 uses only 6.1 GPU days, which is less than half day in our environment, as we have 20 GPUs running in parallel. Notice that we did not cherry pick the results for \SHOR{}-3. In Section \ref{sec:hyperparameter_for_harmonica} we show that by running \SHOR{}-3 for longer time, one can obtain a few other solutions better than hand tuning. 

\paragraph{Performance of provable methods:} 
\SHOR{}-1 has noiseless and noisy recovery guarantees (Lemma \ref{thm:main}), which are validated experimentally. 

\vspace{10pt}



%

\subsection{Average Test Error For Each Stage}
We computed the average test error among $300$ random samples for an $8$ layer network with $30$ epochs after each stage. See Figure \ref{fig:drop}. After selecting $5$ features in stage $1$, the average test error drops from $60.16$ to $33.3$, which indicates the top $5$ features are very important. As we proceed to stage $3$, the improvement on test error becomes less significant as the selected features at stage $3$ have mild contributions.

\subsection{Hyperparameters for Harmonica}
\label{sec:hyperparameter_for_harmonica}
To be clear, \SHOR{} itself has six hyperparameters that one needs to set including
the number of stages, $\ell_1$ regularizer for Lasso,
the number of features selected per stage, base algorithm, small
network configuration, and the number of samples per
stage. Note, however, that we have reduced the search space of general
hyperparameter optimization down to a set of only six
hyperparameters.  Empirically, our algorithm is robust to different
settings of these parameters, and we did not even attempt to tune
some of them (e.g., small network configuration).

	\begin{figure}
	\centering
	\includegraphics[width=\textwidth]{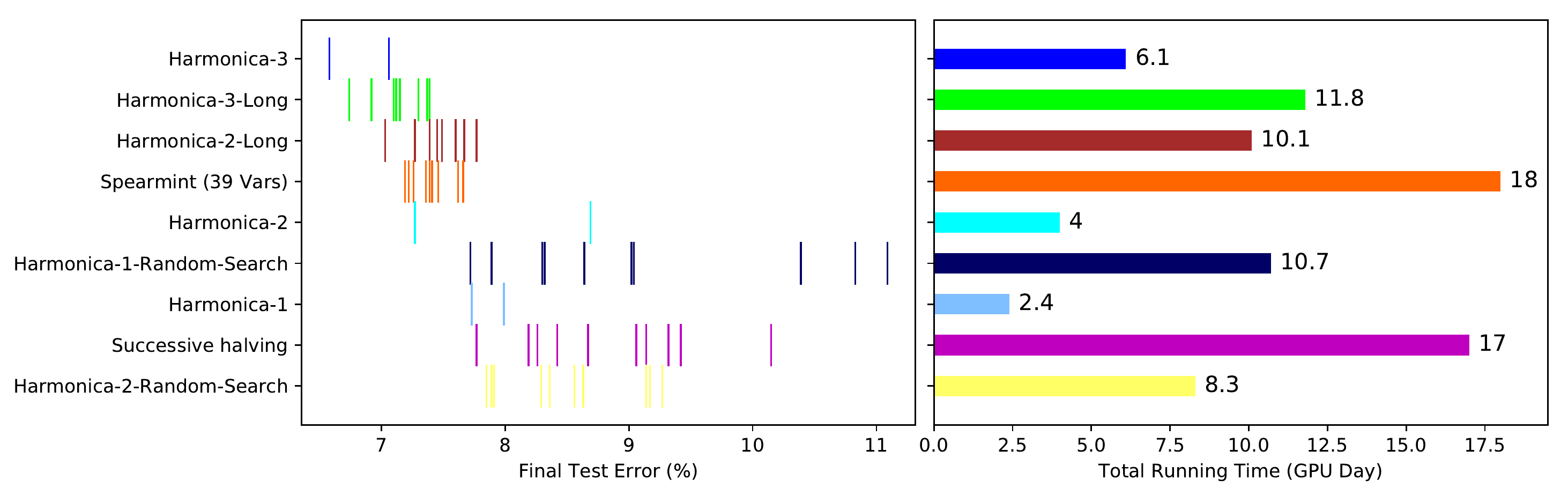}
	\caption{{\small Comparing different variants of \SHOR{} with SH on test error and running time}}
	\label{fig:hyper_best}	
\end{figure}

\paragraph{Base algorithm and \#stages.}
We tried different versions of \SHOR{}, including \SHOR{} with 1 stage, 2 stages and 3 stages using SH as the base algorithm (\SHOR{}-1, \SHOR{}-2, \SHOR{}-3), with 1 stage and 2 stages using Random Search as the base algorithm (\SHOR{}-1-Random-Search, \SHOR{}-2-Random-Search), and with 2 stages and 3 stages running SH as the base for longer time (\SHOR{}-2-Long, \SHOR{}-3-Long).
As can be seen in Figure \ref{fig:hyper_best}, most variants produce better results than SH and use less running time.
Moreover, if we run \SHOR{} for longer time, we will obtain more stable solutions with less variance in test error.



\label{sec:appendix:sensitivity}

\begin{table} 
	\captionof{table}{{\small Stable ranges for parameters in Lasso}	}
	\label{tab:stable}
	\centering
	{\small
		\begin{tabular}{|c|c|c|c|} 
			\hline
			Parameter& Stage 1& Stage 2 & Stage 3 \\
			\hline 
			$\lambda$ & $[0.01,4.5]$& $[0.1,2.5]$ & $[0.5,1.1]$\\
			\hline 
			\#Samples & $\geq 250$& $\geq 180$ & $\geq 150$\\		
			\hline  
		\end{tabular}
	}
\end{table}

\paragraph{Lasso parameters are stable.} See Table \ref{tab:stable} for stable range for regularization term  $\lambda$ and  the number of samples. Here stable range means as long as the parameters are set in this range, the top $5$ features and the signs of their weights (which are what we need for computing $g(x)$ in Procedure \ref{alg:2}) do not change. In other words, the feature selection outcome is not affected. 
When parameters are outside the stable ranges, usually the top features are still unchanged, and
we miss only one or two out of the five features.

\paragraph{On the degree of features.}
We set degree to be three because it does not find any important
features with degree larger than this. Since Lasso can be solved
efficiently (less than $5$ minutes in our experiments), the choice of degree can be decided automatically. 


%



\subsection{Experiments with Synthetic functions}
\label{sec:synthetic_func}
            
\begin{figure}
		\begin{minipage}{0.45\textwidth}
		\centering
		\includegraphics[width=170pt]{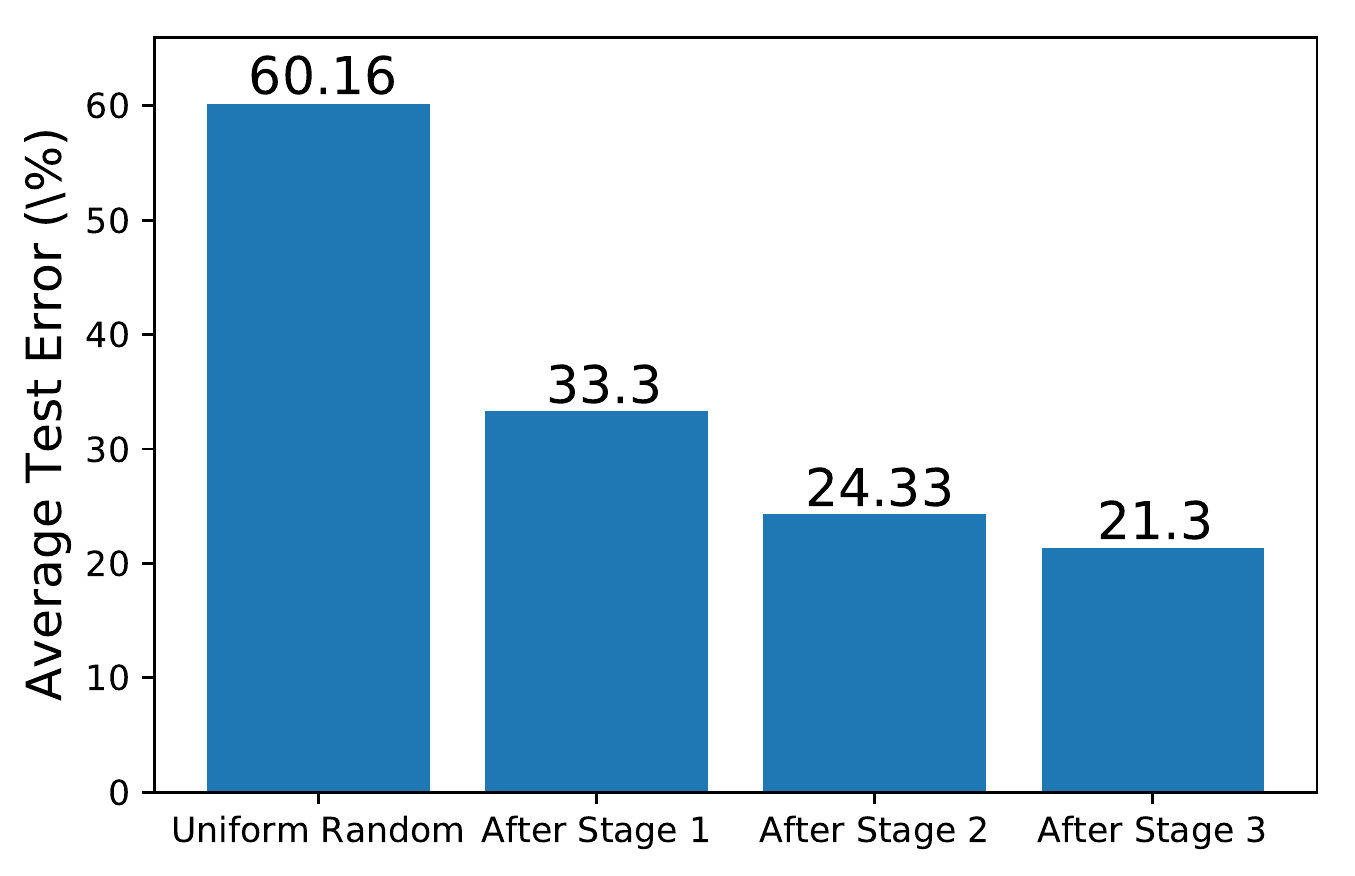}
		\captionof{figure}{{\small Average test error drops.}}
		\label{fig:drop}	
	\end{minipage}	
	\begin{minipage}{0.45\textwidth}
	\centering
	\includegraphics[width=180pt]{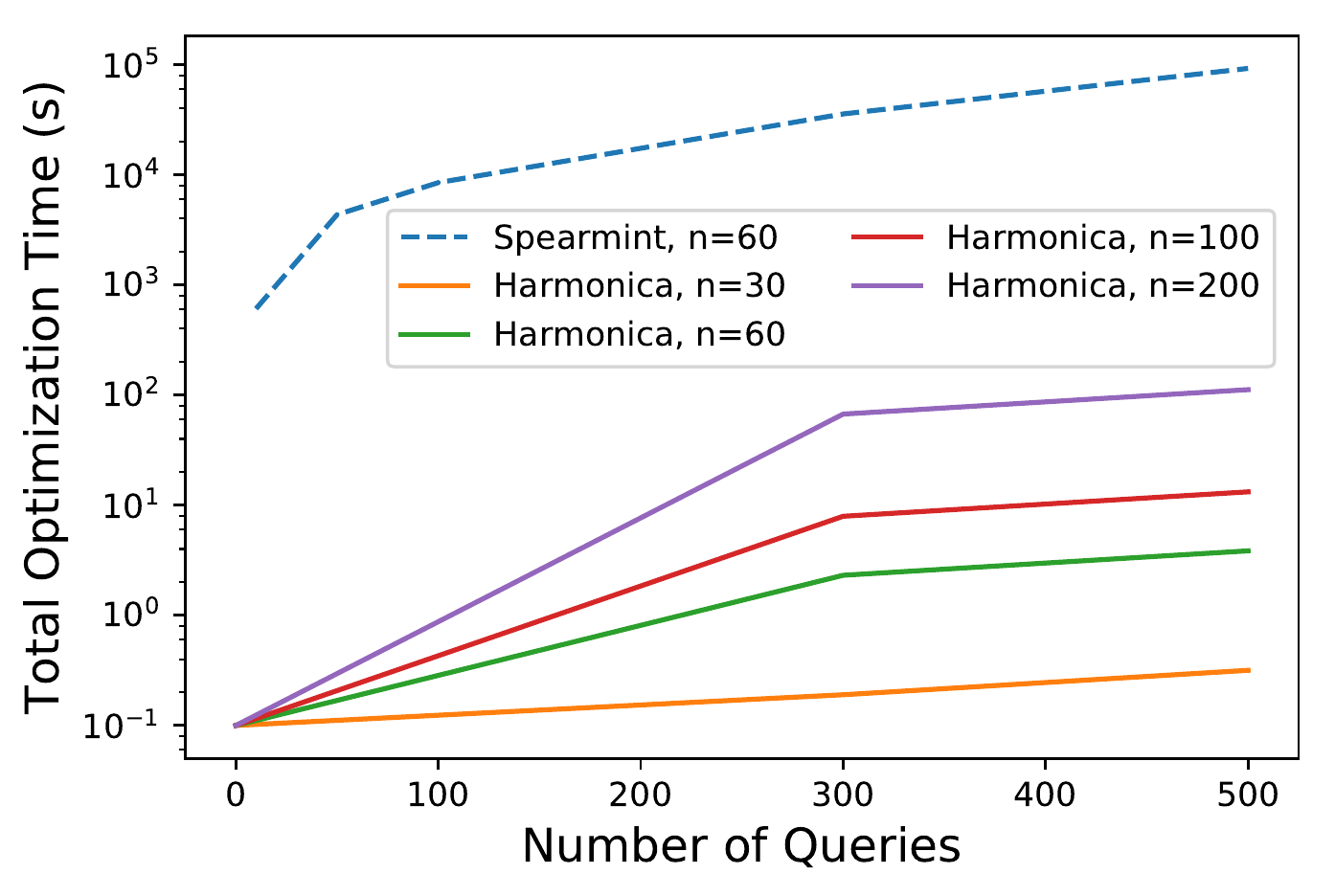}
	\caption{{\small Optimization time comparison}}
	\label{fig:runTime}	
\end{minipage}

\end{figure}
Our second experiment considers a synthetic hierarchically bounded function $h(x)$. We run \SHOR{} with $100$ samples, $5$ features selected per stage, for $3$ stages, using degree $3$ features. 
See Figure \ref{fig:runTime} for optimization time comparison. We only plot the optimization time for Spearmint when $n=60$, which takes more than one day for $500$ samples. \SHOR{} is several magnitudes faster than Spearmint.
In Figure \ref{fig:noise}, we show that \SHOR{} is able to estimate the hidden function with error proportional to the noise level. 

The synthetic function $h(x)\in  \{-1,+1\}^{n}\rightarrow \R$ is defined as follows.
$h(x)$ has three stages, and in $i$-th stage ($i=0,1,2$), it has $32^i$ sparse vectors $s_{i,j}$ for $j=0,\cdots, 32^i-1$. 
Each $s_{i,j}$ contains $5$ pairs of weight $w_{i,j}^k$ and feature $f_{i,j}^k$ for $k=1,\cdots 5$, where 
$w_{i,j}^k \in [10+10^{-i}, 10+10^{2-i}]$.
and $f_{i,j}^k$ is a monomial on $x$ with degree at most $3$. Therefore, for input $x\in \mathbb{R}^{n}$, the sparse vector $s_{i,j}(x)=\sum_{k=1}^5 w_{i,j}^k f_{i,j}^k(x)$. Since $x\in \{
-1,+1
\}^n$, $f_{i,j}^k(x)$ is binary. Therefore, $\{f_{i,j}^k(x)\}_{k=1}^5$ contains $5$ binaries and represents a integer in $[0,31]$, denoted as $c_{i,j}(x)$. Let $h(x)=s_{1,1}(x)+ s_{2,c_{1,1}(x)}(x)+
s_{3,c_{1,1}(x)*32+c_{2,c_{1,1}(x)}(x)}(x)+\xi$, where $\xi$ is the noise uniformly sampled from $[-A,A]$ ($A$ is the noise level). 
In other words, in every stage $i$ we will get a sparse vector $s_{i,j}$. Based on $s_{i,j}(x)$, we pick a the next sparse function and proceed to the next stage.

\begin{figure}[h!]
	\centering
	\includegraphics[width=160pt]{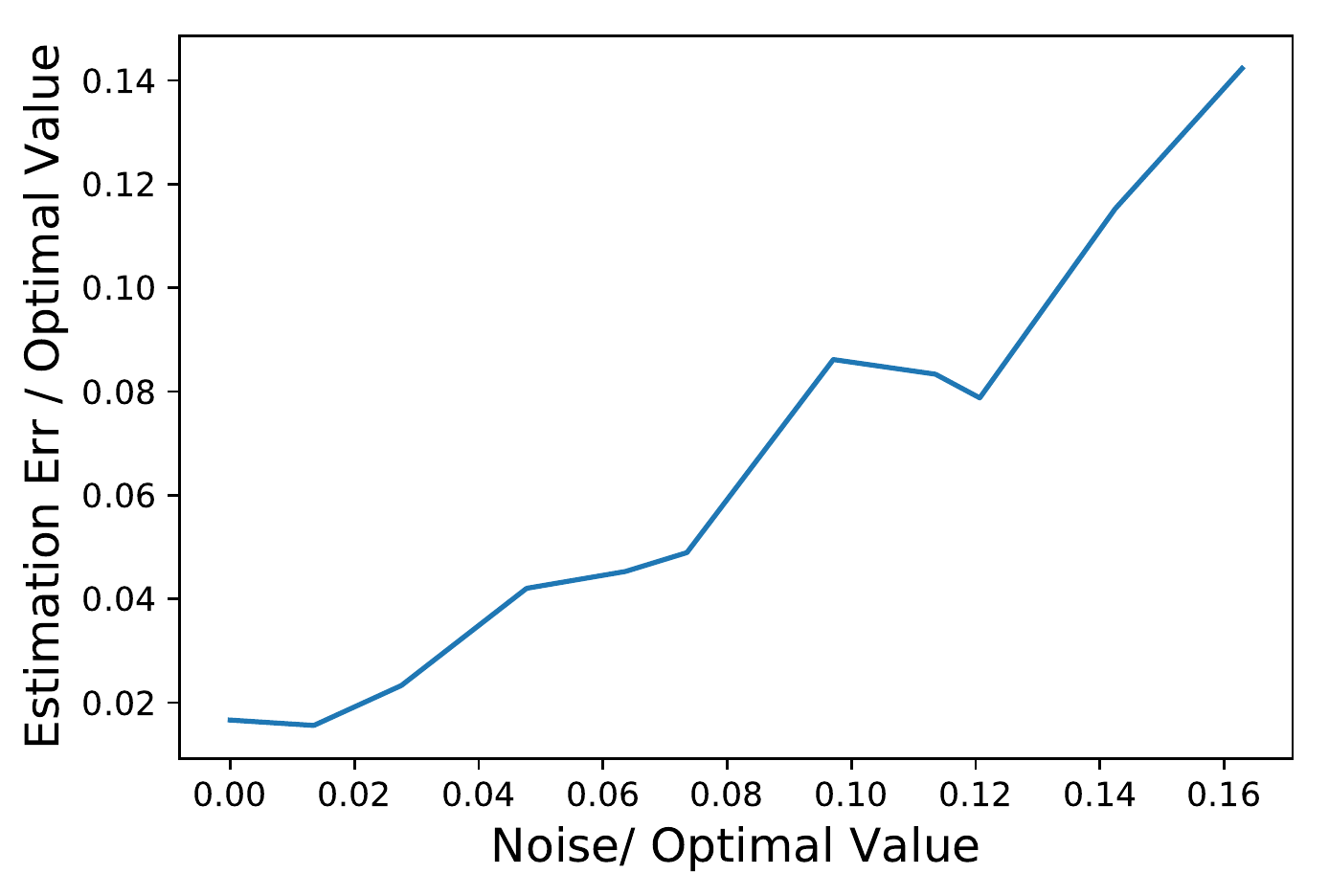}
	\caption{The estimation error of \SHOR{} is linear in noise level.}
	\label{fig:noise}	
\end{figure}

\section{Acknowledgements}

Thanks to Vitaly Feldman for pointing out the work of Stobbe and Krause \cite{StobbeKrause}.  We thank Sanjeev Arora for helpful discussions and encouragement.  Elad Hazan is supported by NSF grant 1523815. This project is supported by a Microsoft Azure research award and Amazon AWS research award.

\bibliographystyle{alpha}
\bibliography{main}
\appendix


\newpage

\section{Experimental details}

\subsection{Options}
\label{sec:appendix:options}
\begin{longtable}{|p{6cm}|p{10cm}|}
		\caption{60 options used in Section \ref{sec:exp:resnet}}
	\label{tab:options}\\	
		\hline
Option Name & Description\\
		\hline
01. Weight initialization & Use standard initializations or other  initializations?\\
		\hline
02. Weight initialization (Detail 1) & 
Xavier Glorot \cite{XavierGlorot}, Kaiming \cite{kaiming_init}, $1/n$, or $1/n^2$?
\\
		\hline
03. Optimization method  & SGD or ADAM? \cite{ADAM}\\
		\hline
04. Initial learning rate  & $\geq0.01$ or $<0.01$?\\
		\hline
05. Initial learning rate (Detail 1) & $\geq 0.1$, $<0.1$, $\geq0.001$, or $<0.001$?\\
		\hline
06. Initial learning rate (Detail 2)& 
0.3, 0.1, 0.03, 0.01, 0.003, 0.001, 0.0003, or  0.0001?
\\
		\hline
07. Learning rate drop   & Do we need to decrease learning rate as we train? Yes or No?\\
		\hline
08. Learning rate first drop time   & If drop learning rate, when is the first time to drop by $1/10$? Epoch 40 or Epoch 60?\\
		\hline
09. Learning rate second drop time  & If drop learning rate, when is the second time to drop by $1/100$? Epoch 80 or Epoch 100?\\
		\hline
10. Use momentum \cite{momentum}  & Yes or No?\\
		\hline
11. Momentum rate & If use momentum, rate is 0.9 or 0.99?\\
		\hline
12. Initial residual link weight  & What is the initial residual link weight? All constant 1 or a random number in $[0,1]$?\\
\hline
13. Tune residual link weight  & Do we want to use back propagation to tune the weight of residual links? Yes or No?\\
		\hline
14. Tune time of residual link weight  & When do we start to tune residual link weight? At the first epoch or epoch 10?\\
	\hline
15. Resblock first activation  & Do we want to add activation layer after the first convolution? Yes or No?\\
	\hline
16. Resblock second activation  & Do we want to add activation layer after the second convolution? Yes or No?\\
	\hline
17. Resblock third activation  & Do we want to add activation layer after adding the residual link? Yes or No?\\
	\hline
18. Convolution bias  &Do we want to have bias term in convolutional layers? Yes or No?\\
	\hline
19. Activation & What kind of activations do we use? ReLU or others?\\
	\hline
20. Activation (Detail 1)  & ReLU, ReLU, Sigmoid, or Tanh?\\
	\hline
21. Use dropout \cite{dropout} & Yes or No?\\
	\hline
22. Dropout rate & If use dropout, rate is high or low?\\
	\hline
23. Dropout rate (Detail 1)  & If use dropout, the rate is 0.3, 0.2, 0.1, or 0.05?\\
	\hline
24. Batch norm \cite{batchnorm} & Do we use batch norm? Yes or No?\\
	\hline
25. Batch norm tuning & If we use batch norm, do we tune the parameters in the batch norm layers? Yes or No?\\
	\hline
26. Resnet shortcut type  & What kind of resnet shortcut type do we use? Identity or others?\\
	\hline
27. Resnet shortcut type (Detail 1) & Identity, Identity, Type B or Type C?\\
	\hline
28. Weight decay & Do we use weight decay during the training? Yes or No?\\
	\hline
29. Weight decay parameter & If use weight decay, what is the parameter? $1e-3$ or $1e-4$?\\
	\hline
30. Batch Size      & What is the batch size we should use? Big or Small?\\
	\hline
31. Batch Size (Detail 1)  & 256, 128, 64, or 32?\\
	\hline
32. Optnet       & An option specific to the code\footnote{https://github.com/facebook/fb.resnet.torch}. Yes or No?\\
	\hline
33. Share gradInput   & An option specific to the code. Yes or No?\\
	\hline
34. Backend  & What kind of backend shall we use? cudnn or cunn? \\
	\hline
35. cudnn running state  & If use cudnn, shall we use fastest of other states?\\
	\hline
36. cudnn running state (Detail 1)  & Fastest, Fastest, default, deterministic\\
	\hline
37. nthreads  & How many threads shall we use? Many or few?\\
	\hline
38. nthreads (Detail 1)  & 8, 4, 2, or 1?\\
	\hline
39-60. Dummy variables & Just dummy variables, no effect at all.\\
\hline	
\end{longtable}

See Table \ref{tab:options} for the specific hyperparameter options that we use in Section \ref{sec:exp:resnet}. For those variables with $k$ options ($k>2$), we use $\log k$ binary variables under the same name to represent them. For example, we have two variables (01, 02) and their binary representation to denote four kinds of possible initializations: Xavier Glorot \cite{XavierGlorot}, Kaiming \cite{kaiming_init}, $1/n$, or $1/n^2$.  

\subsection{Importance features}
\label{sec:appendix:important_feature_table}

We show the selected important features and their weights during the first $3$ stages in Table \ref{tab:important_features}, where each feature is a monomial of variables with degree at most $3$. 
We do not include the 4th stage because in that stage there are no features with nonzero weights.

\textbf{Smart choices on important options}.
Based on Table \ref{tab:important_features}, \SHOR{} will fix the following variables (sorted according to their importance): 
Batch Norm (Yes), Activation (ReLU), Initial learning rate ($[0.001,0.1]$), Optimization method (Adam), 
Use momentum (Yes), 
Resblock first activation (Yes), Resblcok third activation (No), Weight decay (No if initial learning rate is comparatively small and Yes otherwise), Batch norm tuning (Yes). Most of these choices match what people are doing in practice. 

\textbf{A metric for the importance of variables}. The features that \SHOR{} finds can serve as a metric for measuring the importance of different variables. For example, Batch Norm turns out to be the most significant variable, and ReLU is second important. By contrast, Dropout, when Batch Norm is presented, does not have significant contributions. This actually matches with the observations in \cite{batchnorm}.

\textbf{No dummy/irrelevant variables selected}.
Although there are $21/60$ dummy variables, we never select any of them. 
Moreover,  the irrelevant variables like cudnn, backend, nthreads, which do not affect the test error, were not selected. 


\begin{table}[h]
	\centering
	\caption{Important features}
	\label{tab:important_features}
	\begin{tabular}{|c|l|c|}
		\hline
		Stage &	Feature Name	 & Weights \\
		\hline 
		1-1 & 24. Batch norm & 8.05\\
		\hline 
		1-2 & 19. Activation & 3.47\\
		\hline 
		1-3 & 04. Initial learning rate * 05. Initial learning rate (Detail 1) & 3.12\\
		\hline 
		1-4 & 19. Activation * 24. Batch norm & -2.55\\
		\hline 
		1-5 & 04. Initial learning rate & -2.34\\
		\hline 
		1-6 & 28. Weight decay  & -1.90\\
		\hline 
		1-7 & 24. Batch norm * 28. Weight decay & 1.79\\
		\hline 
		1-8&34. Optnet * 35. Share gradInput * 52. Dummy \footnote{This is an interesting feature. In the code repository that we use, optnet, shared gradInput are two special options of the code and cannot be set true at the same time, otherwise the training becomes unpredictable.
		} & 1.54\\
		\hline 
		2-1 & 03. Optimization method & -4.22\\
		\hline 
		2-2 & 03. Optimization method * 10. Use momentum & -3.02\\
		\hline 
		2-3 & 15. Resblock first activation & 2.80\\
		\hline 
		2-4 & 10. Use momentum& 2.19\\
		\hline 
		2-5 & 15. Resblock first activation * 17. Resblock third activation  & 1.68\\
		\hline 
		2-6& 01. Good initialization &-1.26\\
		\hline 
		2-7&01. Good initialization * 10. Use momentum &-1.12\\
		\hline
		2-8&01. Good initialization * 03. Optimization method &0.67\\
		\hline 
		3-1&
		29. Weight decay parameter &-0.49\\
		\hline
		3-2& 28. Weight decay&-0.26\\
		\hline
		3-3& 06. Initial learning rate (Detail 3) * 28. Weight decay &0.23\\
		\hline
		3-4& 25. Batch norm tuning &0.21\\
		\hline
		3-5& 28. Weight decay * 29. Weight decay parameter &0.20\\
		\hline
	\end{tabular}
\end{table}

\subsection{Generalizing from small networks to big networks}
\label{sec:tune_small}

In our experiments, \SHOR{} first runs on a small network to extract important
features and then uses these features to do fine tuning on a big network. Since \SHOR{} 
finds significantly better solutions, it is natural to ask whether other algorithms can also exploit this strategy to improve performance. 

Unfortunately, it seems that all the
other algorithms do not naturally support feature extraction from a small network. For Bayesian Optimization techniques, small networks and large networks have different optimization spaces. Therefore without some modification, Spearmint cannot use information from the small network to update the prior distribution for the large network. 

Random-search-based techniques are able to find configurations with
low test error on the small network, which might be good candidates
for the large network. However, based on our simulation, good
configurations of hyperparameters from random search do not generalize
from small networks to large networks. This is in contrast to
important features in our (Fourier) space, which do seem to generalize. 

To test the latter observation using Cifar-10 dataset, we first spent 7 GPU days on $8$ layer network to find top $10$ configurations among $300$ random selected configurations. Then we apply these $10$ configurations, as well as $90$ locally perturbed configurations (each of them is obtained by switching one random option from one top-$10$ configuration), so in total $100$ ``promising'' configurations, to the large $56$ layer network. 
This simulation takes $27$ GPU days, but the best test error we obtained is 
only $11.1\%$, even worse than purely random search. 
Since Hyperband is essentially a fast version of Random Search, it
also does not support feature extraction. 

Hence, being able to extract important features from small networks seems empirically to be a  unique feature of \SHOR{}.

\end{document}